\documentclass{article}

\usepackage{PRIMEarxiv}

\usepackage[utf8]{inputenc} % allow utf-8 input
\usepackage[T1]{fontenc}    % use 8-bit T1 fonts
\usepackage{hyperref}       % hyperlinks
\usepackage{url}            % simple URL typesetting
\usepackage{booktabs}       % professional-quality tables
\usepackage{amsfonts}       % blackboard math symbols
\usepackage{nicefrac}       % compact symbols for 1/2, etc.
\usepackage{microtype}      % microtypography
\usepackage{lipsum}
\usepackage{fancyhdr}       % header
\usepackage{graphicx}       % graphics
\graphicspath{{media/}}     % organize your images and other figures under media/ folder

\usepackage{amsmath}
\usepackage{amssymb}
\usepackage{mathtools}
\usepackage{amsthm}
\usepackage{subcaption}
\usepackage{booktabs}
\usepackage{graphicx}
\usepackage{algorithm}
\usepackage{wrapfig}

\usepackage{algpseudocode}
\usepackage{float}
\usepackage{amsthm}
\newtheorem{theorem}{Theorem}
\newtheorem{assumption}{Assumption}
\usepackage[numbers]{natbib}

\title{ISEP: Implicit Support Expansion for Offline Reinforcement Learning via Stochastic Policy Optimization
}

\author{
  Yifei Chen \quad Shaoqin Zhu \quad Xiaoqiang Ji\thanks{Corresponding author.} \\
  The Chinese University of Hong Kong, Shenzhen \\
  Longgang, Shenzhen, Guangdong, China \\
  \texttt{carolinechen0908@gmail.com} \\
  \texttt{shaoqinzhu@link.cuhk.edu.cn}, \texttt{jixiaoqiang@cuhk.edu.cn}
}

\begin{document}
\maketitle

\begin{abstract}
Offline reinforcement learning methods typically enforce strict constraints to ensure safety; yet this rigidity often prevents the discovery of optimal behaviors outside the immediate support of the behavior policy. To address this, we propose Implicit Support Expansion via stochastic Policy optimization (ISEP), which leverages a value function interpolated between in-distribution data and policy samples to implicitly expand the feasible action support. This mechanism "densifies" high-reward regions, creating a navigable path for policy improvement while theoretically guaranteeing bounded value error. However, optimizing against this expanded support creates a multimodal landscape where standard deterministic averaging leads to mode collapse and invalid actions. ISEP mitigates this via a stochastic action selection strategy, optimizing the policy by stochastically alternating between conservative cloning and optimistic expansion signals. We instantiate this framework as ISEP-FM using Conditional Flow Matching utilizing classifier-free guidance to effectively capture the interpolated value signal. 
\end{abstract}

% keywords can be removed
\keywords{Offline Reinforcement learning}

\begin{figure}[ht]
\centering
\includegraphics[width=0.7\columnwidth]{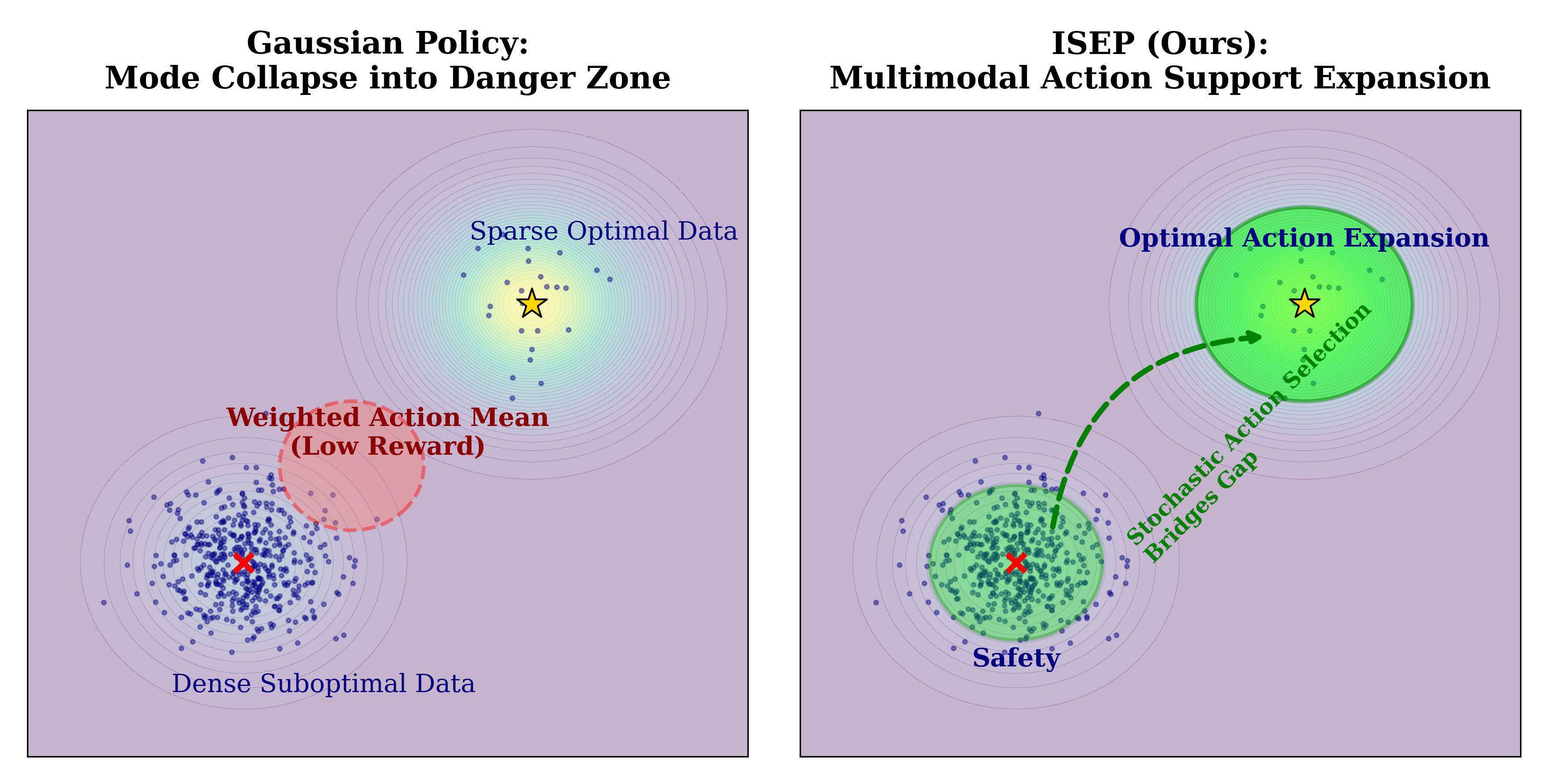}
\caption{
\textbf{Conceptual Illustration of Support Expansion on Disjoint Manifolds.} The environment consists of two safe, high-reward "islands" (radius $r=1$) surrounded by a low-reward danger background. The dataset is heavily skewed towards the suboptimal island (bottom-left).\textbf{(Left) Gaussian Failure:} A unimodal Gaussian policy attempts to cover both the dense suboptimal data and the sparse optimal data. This results in the distribution stretching across the background, placing significant probability mass in the danger zone (the "void" between islands).\textbf{(Right) ISEP (Ours):} By utilizing Flow Matching with a stochastic action selection strategy, ISEP learns a disjoint, multimodal policy. It maintains a mode anchored to the dense data while successfully expanding a distinct mode on the optimal island, bridging the value gap without crossing the dangerous background.
}
\label{fig:concept_fig}
\end{figure}
\section{Introduction}
Offline reinforcement learning (RL) aims to learn effective policies from static datasets without online interaction, enabling application in domains where exploration is costly or unsafe, such as robotics, healthcare, and autonomous systems~\cite{fujimoto2019offpolicydeepreinforcementlearning, offlinerl_survey, offlinerl_survey2}. A fundamental challenge in this setting is balancing between conservatism—staying within the dataset's support to ensure valid value estimation—and optimism—generalizing beyond suboptimal data to discover improved behaviors. Prior methods typically resolve this by enforcing strict support constraints: they either limit updates to in-sample transitions~\cite{iql,eql, sql}, restrict actions to the behavior policy’s support~\cite{bcq, td3bc, bear, spot}, or penalize overestimated critic values~\cite{brac, cql}. While these constraints ensure safety, they often hinder the discovery of optimal policies by suppressing high-reward behaviors that might be sparsely represented in the dataset.

To overcome this limitation, we propose \textbf{ISEP (Implicit Support Expansion via stochastic Policy optimization)}. Unlike prior methods that treat the dataset distribution as a rigid boundary, ISEP implicitly expands the valid support of the value function to encompass high-value actions that are valid but sparsely represented. The core of our approach is a hybrid objective that interpolates between in-distribution regression and exploratory queries on policy-generated samples. This mechanism effectively "densifies" the support in high-reward regions, creating a safe bridge for the policy to improve. This trade-off is governed by a controllable scalar parameter; we theoretically show that this expansion guaranties bounded value estimates, enabling controlled extrapolation without the risk of catastrophic divergence.

However, expanding the support introduces a new challenge for policy optimization. The resulting objective is inherently multimodal: the policy receives conflicting signals to both clone the conservative data and explore the expanded high-value regions. As illustrated in Figure~\ref{fig:concept_fig}, in the non-convex landscapes typical of RL, a standard deterministic interpolation between these actions leads to "mode collapse"—averaging a safe action and an optimistic action often results in a low-value action off the data manifold. To address this, ISEP employs a \textbf{Stochastic Action Selection} strategy. Instead of minimizing a weighted mean objective, we stochastically alternate between in-distribution actions and expanded actions, preserving the distinct modes of valid behavior.

This stochastic formulation necessitates a policy architecture capable of representing complex, multimodal distributions, which standard unimodal Gaussian policies fail to capture. Consequently, we instantiate our framework as ISEP-FM using classifier-free guidance Flow Matching~\cite{cfgrl}, ensuring the flow effectively captures the hybrid learning signal.

\section{Related Work}
To mitigate extrapolation error in offline RL, a variety of algorithms have been proposed. One prominent direction utilizes in-sample learning to strictly avoid updates based on OOD actions \cite{iql, onestep, insample3, insample4}, while more recent approaches propose mild exploitation beyond the dataset \cite{mild2}. Alternatively, value regularization methods mitigate overestimation by penalizing critic values \cite{cql, brac, counterfactualconservativeqlearning, adversariallytrainedactorcritic} or estimating uncertainty via ensembles \cite{uncertaintybasedofflinereinforcementlearning,pessimisticbootstrappinguncertaintydrivenoffline}. Similarly, policy constraint methods explicitly restrict the learned policy to the behavior distribution \cite{td3bc, bcq,bear,supportedtrustregionoptimization}. However, relying on strict penalization or constraints often necessitates a trade-off: theoretical stability is achieved at the cost of excessive conservatism. Consequently, these methods tend to struggle in propagating signals into high-value regions that are reachable but sparsely represented. This "density bias" can lead to convergence on suboptimal modes when the optimal trajectory lacks sufficient data density.

\textbf{Generative Models in Offline RL.} Diffusion \cite{ddpm, diffusion_former} and flow matching \cite{flowmatching} excel at modeling multimodal policies \cite{visuomotorpolicy, reinflow}, but incorporating value maximization is challenging. Direct Q-gradient backpropagation \cite{diffusion_ql, flowqlearning, Ada_2024} is often computationally expensive and unstable, while simple advantage weighting \cite{ding2024diffusionbasedreinforcementlearningqweighted, kang2023efficientdiffusionpoliciesoffline} yields ineffective results. Furthermore, guidance-based methods \cite{qgpo, qipo, idql, sfbc} typically rely on complex two-stage pipelines involving separate behavior priors. To avoid these issues, we adopt Classifier-Free Guided RL (CFGRL) \cite{cfgrl}. This framework enables a single-stage training process where the policy is conditioned on optimality tokens, allowing for dynamic control between behavior cloning and value maximization at inference time.

\section{Preliminaries}

\subsection{Offline Reinforcement Learning}
The RL environment is a Markov Decision Process (MDP) defined by the tuple $(\mathcal{S}, \mathcal{A}, p_0, p, r, \gamma)$, where $\mathcal{S}$ is the state space, $\mathcal{A}$ the action space, $p_0$ the initial state distribution, $p(s'|s,a)$ the transition dynamics, $r(s,a)$ the reward function, and $\gamma \in [0,1)$ the discount factor. The agent's goal is to learn a policy $\pi(a|s)$ that maximizes the expected discounted return $\mathbb{E}_\pi\big[\sum_{t=0}^\infty \gamma^t r(s_t,a_t)\big]$.  
In the offline setting, instead of learning from interacting with the environment, the agent learns solely from a fixed dataset $\mathcal{D} = \{(s,a,r,s')\}$ collected by some behavior policy $\pi_b$.

\subsection{Implicit Q-Learning (IQL)}
Implicit Q-Learning (IQL) \cite{iql} strictly limits value estimation to the dataset support to avoid extrapolation error. It approximates the optimal value function by treating $V_\psi$ as an expectile of the Q-values via the asymmetric loss $L_2^\tau(u) = |\tau - \mathbb{I}(u < 0)|u^2$. Given the learned value, the policy is extracted using Advantage-Weighted Regression (AWR) \cite{awr1, awr2, awr}, which essentially performs behavior cloning weighted by $\exp(\alpha(Q - V))$.

\subsection{Classifier-Free Guided RL (CFGRL)}
CFGRL \cite{cfgrl} adapts classifier-free guidance to reinforcement learning to steer policies toward high-return regions without retraining. It trains a conditional flow model $v_{\phi}(a^t, t, s, o)$ where optimality is treated as a conditioning token $o \in \{\varnothing, 1\}$. At inference, the policy is refined via linear interpolation in velocity space:
\begin{equation}
    v_{\text{guided}} = (1 - w) v_{\phi}(a^t, t, s, \varnothing) + w v_{\phi}(a^t, t, s, o = 1).
\end{equation}
The guidance scale $w$ controls the trade-off between the base policy distribution and the high-value optimality condition, enabling test-time flexibility.

\section{Method}
We propose \textbf{ISEP}, a framework designed to overcome the limitations of static dataset support in offline RL. The core principle of ISEP is that optimal actions often lie in the interstices of the data support—regions that are sparsely covered or implicit in the dataset but are reachable via generalization~\cite{d4rl}. To address this, ISEP implicitly expands the action support by utilizing an interpolated value estimation while maintaining safety by anchoring to the dataset support.

To realize this, we introduce three coupled innovations: (1) An \textbf{Interpolated Value Objective} that achieves implicit action support expansion while balancing safety via a controllable parameter $p$; (2) A \textbf{Stochastic Action Selection} that prevents mode collapse when navigating non-convex and multimodal landscapes; and (3) A \textbf{Flow-Matching Instantiation} required to capture the complex, multimodal distributions aligning with our stochastic action selection strategy.

\subsection{Interpolated Value Estimation: Implicit Support Expansion}
Standard in-sample methods limit value estimation strictly to the static support of the dataset actions. This approach is overly conservative; when optimal actions are statistically sparse, standard regression is overwhelmed by the density of suboptimal data. To address this, we propose an interpolated objective that anchors values to the dataset while actively expanding support into high-value regions:

\begin{equation}
    \mathcal{L}_{V}(\psi) = (1-p) \mathbb{E}_{(s,a) \sim \mathcal{D}} \Big[ L_2^\tau\big(Q_{\hat{\theta}}(s, a) - V_\psi(s)\big) \Big] \quad + p \mathbb{E}_{\substack{s \sim \mathcal{D} \\ \hat{a} \sim \pi\phi(\cdot|s)}} \Big[ \big(Q_{\hat{\theta}}(s, \hat{a}) - V_\psi(s)\big)^2 \Big]
\label{eq:value_loss} 
\end{equation}

where $L_2^\tau$ is the asymmetric expectile loss and $\hat{a}$ is sampled from the current policy $\pi_\phi$. The parameter $p$ governs the balance between safety and exploration. While the first term ensures that value estimates remain grounded in real transitions, the second term drives \textit{Implicit Support Expansion}. By querying the Q-function with policy-generated actions $\hat{a} \sim \pi_\phi$, the method explores high-value regions that may be absent or sparse in the static dataset. These policy samples act as synthetic regression targets, effectively "densifying" the the optimal samples and allowing $V_\psi$ to capture the value of actions that are realizable but under-represented in $\mathcal{D}$. Crucially, despite exploring OOD actions, this expansion mechanism is theoretically safe. As demonstrated in Section \ref{sec:bounded_value}, $V_\psi$ remains upper-bounded by the optimal value function $V^*$.

The Q-function is updated using a standard Bellman error on the static dataset:
{\footnotesize
\begin{equation}
L_Q (\theta) = \mathbb{E}_{(s,a,s') \sim D} \left[ \left( r(s, a) + \gamma V_{\psi}(s') - Q_{\theta}(s, a) \right)^2 \right]
\label{eq:q_loss}
\end{equation}
}

\subsection{Stochastic Action Selection}\label{sec:stochastic_policy}
With the value function established, the policy must balance two potentially conflicting signals: mimicking dataset actions (behavior cloning) and moving toward high-value regions discovered by $V_\psi$ (support expansion). A standard deterministic interpolation of these gradients is problematic. As derived in \textbf{Appendix~\ref{app:mode_collapse_proof}}, minimizing a deterministic weighted sum implicitly forces the policy to target the weighted arithmetic mean of the dataset action $a_{\mathcal{D}}$ and the policy proposal $a_{\pi}$:

\begin{equation}
\tilde{\mu}_\phi(s) \approx \frac{ (1-p) \, \omega(s, a_{\mathcal{D}}) \, a_{\mathcal{D}} + p \, \omega(s, a_{\pi}) \, a_{\pi} }{ (1-p) \, \omega(s, a_{\mathcal{D}}) + p \, \omega(s, a_{\pi}) },
\end{equation}

In offline RL, the set of valid actions is often multimodal~\cite{visuomotorpolicy,florence2021implicitbehavioralcloning} and the action-value landscape is typically non-convex~\cite{li2022softmaxpolicygradientmethods}. The linear combination of two valid modes (e.g., passing an obstacle on the left versus right) frequently lies in a low-value region (collision). The naive interpolation is thus catastrophic. To prevent this mode collapse, ISEP employs a \textbf{Stochastic Action Selection}. Instead of averaging gradients, we stochastically select the target distribution at each update step using a Bernoulli gate $B \sim \text{Bernoulli}(p)$:

\begin{equation}
        \mathcal{L}_{\pi}(\phi) = (1-B) \cdot \mathbb{E}_{(s,a) \sim \mathcal{D}}[\omega(s,a)\log \pi_\phi(a \mid s)]
+B \cdot \mathbb{E}_{\substack{s \sim \mathcal{D},\\ \hat{a} \sim \pi_{\phi}(\cdot|s)}}[\omega(s,\hat{a})\log \pi_\phi(\hat{a} \mid s)],
\label{eq:policy_loss}
\end{equation}

where $\omega(s,a) = \exp(\beta( Q_{\hat{\theta}}(s, a) - V_\psi(s)))$ are advantage-based weights\cite{awr}. This formulation ensures that the policy is pushed toward a distinct, valid mode—either the conservative data sample or the optimistic policy proposal—strictly avoiding the low-value valleys inherent to deterministic interpolation.

The complete method is summarized in Algorithm~\ref{alg:algorithm1}.

\begin{figure}[ht]
\centering
\includegraphics[width=0.4\textwidth]{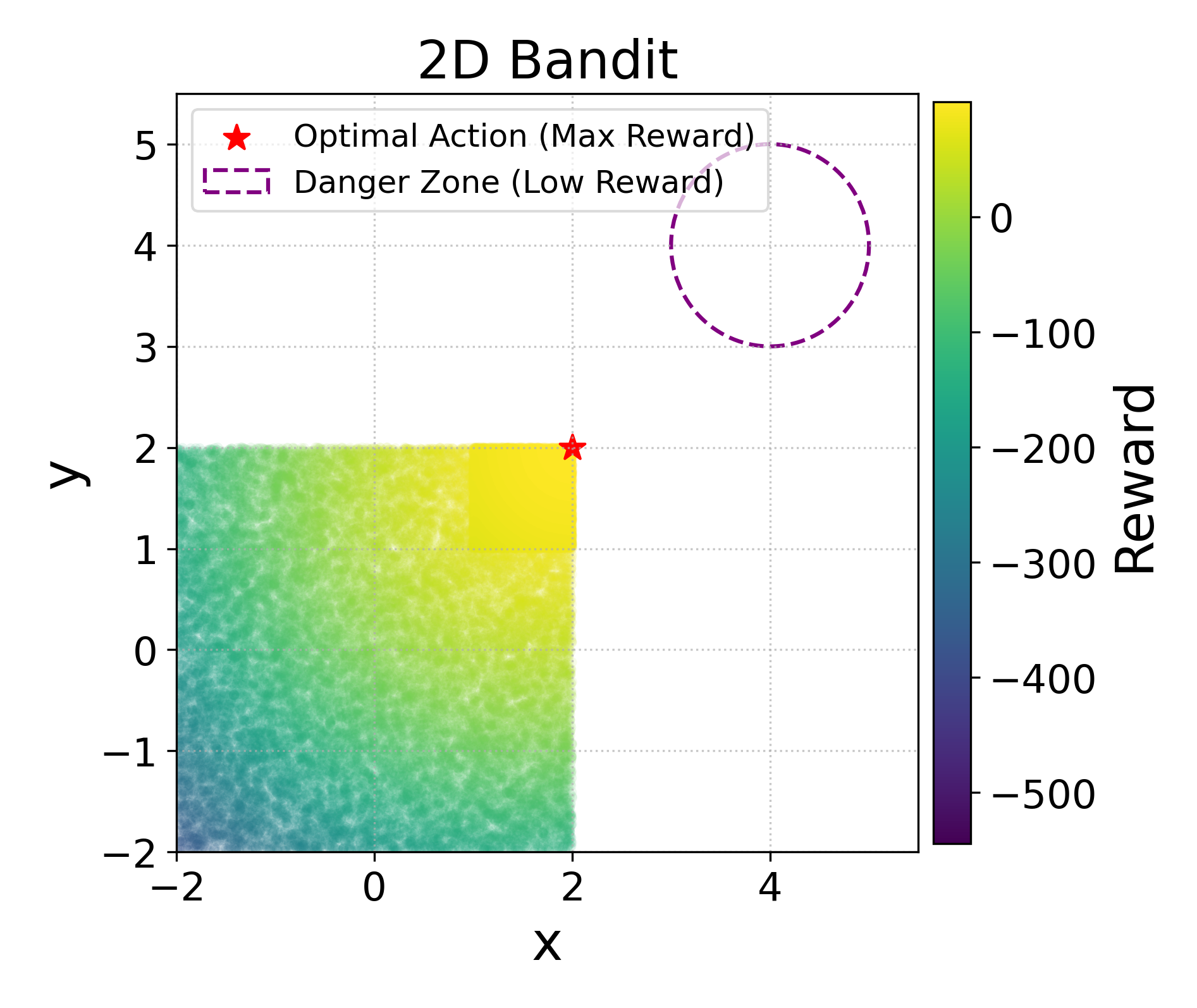}
\caption{
Visualization of the offline dataset used in the 2D bandit task.  Contour is used to indicate the reward value associated with each scatter point. The offline data consists of uniformly sampled actions from the square region $[-2, 2]^2$, shown as scattered points. The optimal action is located at $(2, 2)$ (marked by a red star). The danger zone with extremely low reward is centered at $(4, 4)$, shown as a purple circle. The task requires learning a policy that extrapolates toward the optimal region while avoiding the danger zone.
}
\label{fig:bandit_data}
\end{figure}

\begin{figure*}[t]
\centering
\includegraphics[width=1.0\textwidth]{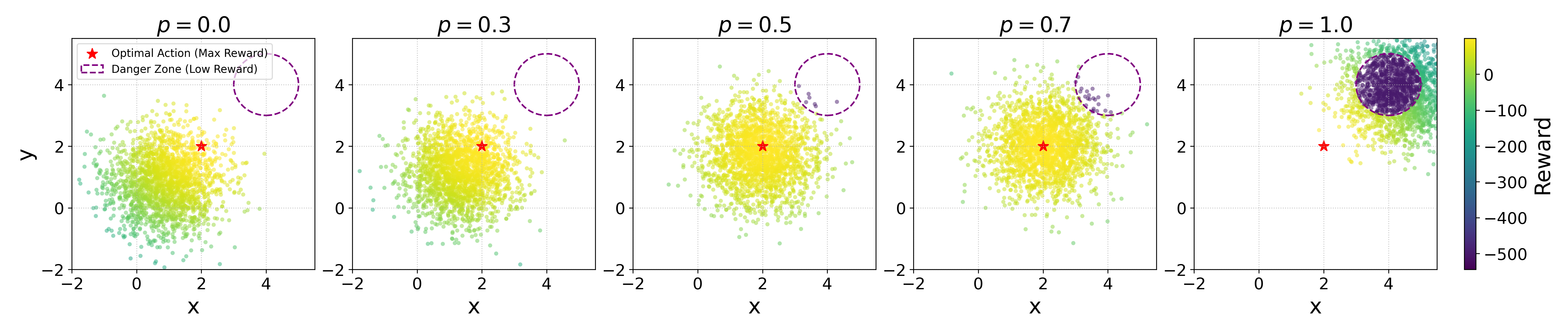}
\caption{Effect of Interpolation Parameter $p$ on Policy Performance in a 2D Bandit Task. The figure illustrates the action distributions generated by the policy for various values of $p$ (ranging from $0.0$ to $1.0$). The color contours represent the reward values, with darker regions indicating lower rewards. The red star marks the optimal center at $(2, 2)$, while the dashed purple circle delineates the danger zone at $(4, 4)$, where the reward sharply declines to $-1000$. Moderate values of $p$ (such as $0.3$ and $0.5$) allow effective generalization beyond the dataset, while aggressive values ($1.0$) lead the policy into the danger zone, demonstrating the critical role of the interpolation parameter in balancing generalization and safety.}
\label{fig:bandit_bernoulli}
\end{figure*}

\subsection{Toy Experiment: Effect of the Interpolation Parameter $p$}
We consider a continuous 2D bandit task. The offline dataset $\mathcal{D}$ contains $10{,}000$ uniform random action samples from $[-2,2]^2$ (first panel, Figure~\ref{fig:bandit_data}). The reward $R(x,y)$ defines a high-reward region centered at $(2, 2)$ with a reward of $-10\big((x - 2)^2 + (y - 2)^2\big) + 100$ and a low-reward danger zone at $(4, 4)$ with a reward of $-1000$. The objective is to learn a policy that identifies the optimal action near $(2, 2)$ while avoiding the danger zone.

At values of $p=0.3$ and $p=0.5$, the policy achieves a favorable balance, enabling effective action support expansion while maintaining safety and reducing the likelihood of entering the danger zone. This experiment illustrates how the parameter $p$ serves as a crucial control mechanism, allowing us to manage the extent of extrapolation, thereby facilitating the action support expansion without compromising safety.

\begin{algorithm}[tb]
\caption{ISEP}
\label{alg:algorithm1}
\begin{algorithmic}[1] % Changed from [2] to [1] for standard line numbering
\State Initialize policy $\pi_\phi$, Q-functions $Q_\theta$, target Q-function $Q_{\hat{\theta}}$, value function $V_\psi$
\For{each iteration}
    \State Sample transition mini-batch $\mathcal{B} = \{(s,a,r,s')\} \sim D$ 
    \State Update $V_\psi$ by minimizing Equation~\ref{eq:value_loss}
    \State Update $Q_\theta$ by minimizing Equation~\ref{eq:q_loss}
    \State Update policy $\pi_\phi$ by minimizing Equation~\ref{eq:policy_loss}
    \State Update target network: $\hat{\theta} \leftarrow \rho \hat{\theta} + (1-\rho) \theta$ 
\EndFor
\end{algorithmic}
\end{algorithm}

\begin{figure*}[t]
\centering
\includegraphics[width=1.0\textwidth]{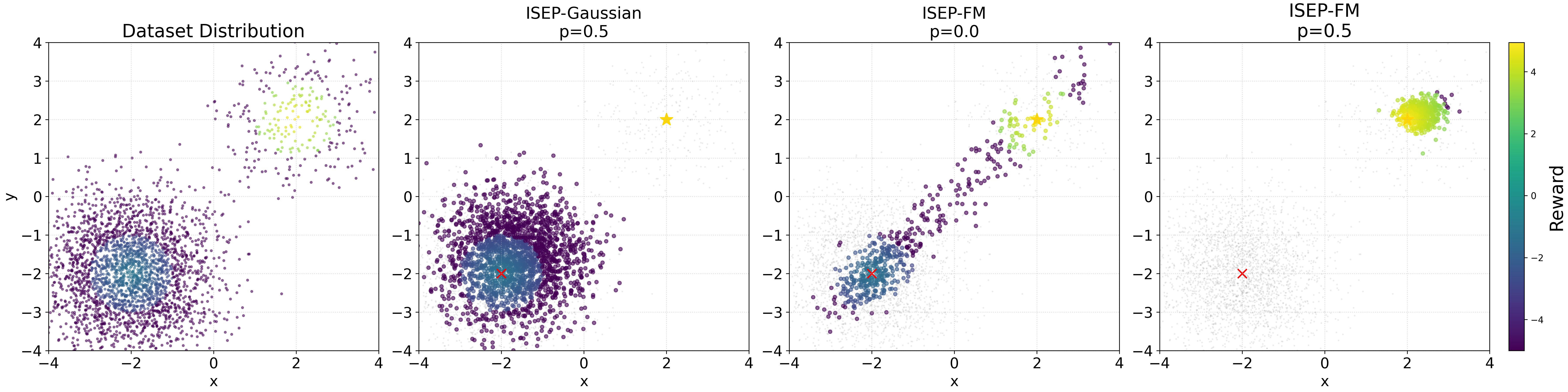}
\caption{Implicit Support Expansion on a Sparse Multimodal Bandit.
The background color contours represent the reward landscape, where darker red indicates higher reward and blue indicates the low-reward background ($r=-5$). The Suboptimal Trap center is marked with a cross ($\times$, -2, -2) and the Optimal center with a star ($\star$, 2, 2).
Dataset: Displays the training distribution, heavily skewed with $90\%$ of samples in the suboptimal trap and only $10\%$ in the optimal region.
ISEP-Gaussian: The unimodal policy averages the disconnected modes, failing to fit the data and placing probability mass in the low-reward background.
ISEP ($p=0$): Without support expansion, the flow matching policy remains conservative; despite high rewards available elsewhere, it collapses into the dense suboptimal trap.
ISEP ($p=0.5$): Our method successfully expands the support of the optimal region. By interpolating the value function, it overcomes data sparsity and shifts the policy distribution to the global optimum.}
\label{fig:fm_toy}
\end{figure*}
\subsection{Instantiation: From Gaussian to Flow}
The decision to use a Stochastic Action Selection Objective (Section~\ref{sec:stochastic_policy}) has a profound architectural implication: the target policy distribution becomes inherently multimodal.If the dataset suggests action $a_{data}$ and the expansion suggests $a_{expand}$, the optimal policy should assign probability mass to both peaks. A standard unimodal Gaussian policy cannot represent this; it would force the distribution to center between the peaks, causing the exact mode collapse we sought to avoid.Therefore, to fully utilize the ISEP framework, we necessitate a generative model capable of representing arbitrary, complex distributions. We instantiate ISEP using Conditional Flow Matching (CFM), parameterized as a velocity field.

Following CFGRL~\cite{cfgrl}, for the flow policy to effectively learn the critic signal, we condition the flow on an \textit{optimality token} $o \in \{0, 1, \varnothing\}$, where $o=1$ indicates advantage-weighted improvement ($A(s,a) \geq 0$) and $o=\varnothing$ (applied with $10\%$ dropout) represents the unconditioned distribution.

We train this model by applying the ISEP Stochastic Mixture principle directly to the flow-matching objective. The loss stochastically switches between ensuring consistency with the dataset and self-consistency with the policy's own expansion:
\begin{equation}
    \mathcal{L}_{\pi}(\phi) = 
(1-B) \cdot \mathbb{E}_{(s,a) \sim \mathcal{D}}\bigl[\| v_{\phi}(a^t, t, s, o) - (a - a^0) \|\bigr] + B \cdot \mathbb{E}_{\substack{s \sim \mathcal{D},\\ \hat{a} \sim \pi_{\phi}(\cdot|s)}}\bigl[\| v_{\phi}(\hat{a}^t, t, s, o) - (\hat{a} - \hat{a}^0) \|\bigr],
\label{eq:cfgrl_policy_loss}
\end{equation}
where $a^t = (1-t)a^0 + ta$, $\hat{a}^t = (1-t)\hat{a}^0 + t\hat{a}$, $t \sim \mathcal{U}[0,1]$, and $a^0,\hat{a}^0 \sim \mathcal{N}(0,I)$. 

At inference, we generate actions by integrating the velocity field with classifier-free guidance. Starting from $a^0 \sim \mathcal{N}(0,I)$, we compute the guided velocity at each step:
\begin{equation}
    v = (1 - w) \, v_{\phi}(a^t, t, s, \varnothing) + w \, v_{\phi}(a^t, t, s, o = 1),
\label{eq:ISEP_flow_sample}
\end{equation}
where $w \geq 0$ is the guidance weight controlling the degree of policy improvement. 

By using Flow Matching, the policy can essentially "superpose" the dataset distribution and the expanded high-value distribution without merging them destructively. At inference time, we use classifier-free guidance on the optimality token $o$ to dynamically adjust the trade-off, effectively navigating the expanded support discovered by ISEP.

\subsection{Toy Experiment: Implicit Support Expansion}

We validate the mechanism of \textit{implicit support expansion} in a sparse optimal reward setting using a 2D multimodal bandit. The dataset is highly imbalanced, featuring a dominant suboptimal mode (90\% of samples) and a sparse optimal region (10\%), separated by a low-reward background. As visualized in Figure \ref{fig:concept_fig}, baselines struggle with this imbalance. A Gaussian policy averages across modes into the low-reward background. Although expectile regression targets high-reward samples, the in-distribution baseline ($p=0$) remains overwhelmed by density bias, leaving the policy trapped in the dominant suboptimal region. In contrast, ISEP ($p=0.5$) leverages implicit support expansion to overcome this density bias. By generating interpolated samples that extend the effective range of high-value data, ISEP successfully shifts probability mass to the sparse global maximum, thereby avoiding the gravitational pull of the suboptimal mode.

\subsection{Theoretical Analysis of ISEP}\label{sec:bounded_value}
We provide a theoretical analysis of ISEP's value estimation under suboptimal datasets. Let $Q^*(s,a)$ be the optimal state-action value function for all $s\in S$ and $a \in A$. For all states $s$, $V^*(s)=\max_{a\in A} Q^*(s,a)$ The first term provides in-distribution support by leveraging expectile regression. While theoretically $\lim_{\tau \to 1}$ recovers the maximum support value \citep{iql}, in practice, limited data necessitates using $\tau \in [0.7, 0.9]$ (when $\tau \to 1$, very little data, not practical for training), making
\begin{align}
    E^\tau_{(s,a)\sim D}\big[Q^*(s, a)\big] &< \lim_{\tau' \to 1}E^{\tau'}_{(s,a)\sim D}\big[Q^*(s, a)\big]\\
    &=\max_{\substack{a \in A \\ \text{s.t. } \pi_{\beta}(a|s) > 0}}Q^*(s,a),
\end{align}
where $\tau'$ is the theoretical expectile that approaches $1$. This introduces a \textit{practical approximation gap}: the learned expectile value is strictly more conservative than the true maximum support value.
\begin{assumption}[Expectile Gap]\label{ass:expectile-gap}
Due to finite sample sizes so $\tau < 1-\epsilon$, where $\epsilon \in (0, 1)$ is non trivial, the learned expectile value $V_\psi^\tau(s)$ underestimates the dataset maximum, for any states $s$ in dataset:
\begin{align}
  \mathbb{E}^{\tau}_{a\sim D(s)}[Q^*(s,a)] \leq \max_{a \in \mathcal{D}(s)} Q^*(s,a) - \delta_{\tau}
\end{align}
\end{assumption}
Besides, there is also this data sub-optimality assumption:
\begin{assumption}[Dataset Sub-optimality]\label{ass:dataset-suboptimal}
Let \( Q^*(s,a) \) denote the optimal state-action value function under the true underlying MDP.  There exist non trivial constants $\delta_{sub} > 0$, such that for any state $s$:
\begin{align}
    \max_{a\in D(s)}Q^*(s,a) &\leq \max_{a' \in \mathcal{A}} Q^*(s,a') - \delta_{sub}\\
    &\leq V^*(s) - \delta_{sub}
\end{align}
\end{assumption}
This assumption captures realistic settings where datasets contain mostly suboptimal trajectories but may include near-optimal actions with some probability $\eta$. The parameter $\delta_{sub}$ quantifies the degree of suboptimality, while $\eta$ represents the coverage of high-quality actions.

\begin{theorem}[ISEP Safety Guarantee]\label{thm:ISEP-safety}
Let $\hat{V}(s)$ be the value function minimizing Equation~\ref{eq:value_loss}.  Under Assumption~\ref{ass:expectile-gap} and Assumption~\ref{ass:dataset-suboptimal}, if for all state $s\in S$, the interpolation parameter $p$ satisfies:
\begin{align}
    p \leq \frac{\delta_\tau + \delta_{sub}}{V_{\max}- V^*(s) + \delta_\tau + \delta_{sub}}
\label{eq:p_bound}
\end{align}
then, with probability at least $\eta$ (governed by the dataset quality in Assumption \ref{ass:dataset-suboptimal}), we have:
\begin{align}
    \hat{V}(s) \leq V^*(s),
\end{align}
 where $R_{\max}$ is the maximum reward and $V_{\max}=\frac{2R_{\max}}{1-\gamma}$.
\end{theorem}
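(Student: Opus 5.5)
We first characterize the pointwise minimizer of Equation~\ref{eq:value_loss}. Fix a state $s$ and treat $V=V_\psi(s)$ as a free scalar; this is the idealized, tabular minimizer $\hat V$. The loss is strictly convex in $V$: it is a positive mix of the expectile loss and a squared loss. Its first-order condition reads
\begin{align}
(1-p)\,\mathbb{E}_{a\sim\mathcal{D}(s)}\bigl[\,|\tau-\mathbb{I}(Q(s,a)<V)|\,(Q(s,a)-V)\bigr] + p\,\mathbb{E}_{\hat a\sim\pi_\phi}\bigl[Q(s,\hat a)-V\bigr]=0 .
\end{align}
The plan is not to solve this exactly. Instead we use monotonicity. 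The left-hand side is strictly decreasing in $V$. So $\hat V(s)\le V^*(s)$ holds as soon as the left-hand side is $\le 0$ at $V=V^*(s)$.

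To get there, we replace the exact minimizer by a convex-combination bound. We would show that $\hat V(s)\le (1-p)\,V^\tau_{\mathcal{D}}(s)+p\,\bar V_\pi(s)$, where $V^\tau_{\mathcal{D}}(s)=\mathbb{E}^\tau_{a\sim\mathcal{D}(s)}[Q(s,a)]$ is the pure expectile solution and $\bar V_\pi(s)=\mathbb{E}_{\hat a\sim\pi_\phi}[Q(s,\hat a)]$ is the mean-regression solution. This is the step where a rigorous argument is delicate. The expectile weights are in $\{\tau,1-\tau\}$ rather than $1$, so the stationary point is a weighted average and not literally the $(1-p,p)$ mixture. We would first try the following argument, and if it does not go through we would adopt the mixture form as the paper's modelling simplification. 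The minimizer of a sum of two convex one-dimensional losses lies between the two individual minimizers. More precisely, it lies on the side where the relative curvatures place it, and a bound of the stated linear form follows after bounding the curvature ratio.

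Next we bound each piece, with $Q$ identified with $Q^*$ on the relevant pairs. Chaining Assumption~\ref{ass:expectile-gap} and Assumption~\ref{ass:dataset-suboptimal} gives
\begin{align}
V^\tau_{\mathcal{D}}(s)\le \max_{a\in\mathcal{D}(s)}Q^*(s,a)-\delta_\tau\le V^*(s)-\delta_{sub}-\delta_\tau .
\end{align}
This chain holds on the event, of probability at least $\eta$, that the sub-optimality structure and the dataset-state conditions apply. That event is the source of the ``with probability at least $\eta$'' qualifier. For the policy term we have no in-support control, since $Q_{\hat\theta}$ may extrapolate on $\hat a$. We therefore use only the crude range bound $\bar V_\pi(s)\le V_{\max}=2R_{\max}/(1-\gamma)$, which holds for any bounded-reward estimate trained by Equation~\ref{eq:q_loss} with bounded targets.

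Combining the two bounds gives
\begin{align}
\hat V(s)\le (1-p)\bigl(V^*(s)-\delta_\tau-\delta_{sub}\bigr)+p\,V_{\max}.
\end{align}
The right-hand side is at most $V^*(s)$ if and only if $p\,(V_{\max}-V^*(s)+\delta_\tau+\delta_{sub})\le \delta_\tau+\delta_{sub}$. This is exactly condition \eqref{eq:p_bound}, and the denominator is positive because $V_{\max}\ge V^*(s)$. Requiring the condition for every $s$ gives the uniform conclusion. The main obstacle is the mixture step in the second paragraph; every other step is routine algebra once that step is granted.
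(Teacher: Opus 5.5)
Your skeleton matches the paper's proof. It uses the $(1-p,p)$ mixture of the expectile solution and the policy mean, the chain through Assumptions~\ref{ass:expectile-gap} and~\ref{ass:dataset-suboptimal} for the first piece, the crude $V_{\max}$ bound for the second, and the same algebra. But one ingredient of the paper's proof is missing, and without it the argument does not close.

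The regression targets in Equation~\ref{eq:value_loss} are $Q_{\hat\theta}$. These are trained by Equation~\ref{eq:q_loss} on bootstrapped targets $r+\gamma V_\psi(s')$, i.e.\ on the very function you are bounding, while both assumptions are stated for $Q^*$. ``Identifying $Q$ with $Q^*$ on the relevant pairs'' is therefore circular. The inequality $Q_{\hat\theta}(s,a)\le Q^*(s,a)$ on dataset pairs amounts to $\mathbb{E}[\hat V(s')]\le\mathbb{E}[V^*(s')]$, which is the theorem's conclusion at successor states. The optimistic policy term could a priori push $V_\psi$, and hence $Q_{\hat\theta}$, above the optimum.

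The paper breaks this loop by induction over value-iteration steps:
start from $V_0=0\le V^*$; if $V_k\le V^*$, then $Q_k(s,a)=\mathbb{E}[r+\gamma V_k(s')]\le Q^*(s,a)$; monotonicity of the expectile gives $\mathbb{E}^\tau[Q_k]\le\mathbb{E}^\tau[Q^*]$; the two assumptions and the $p$-condition then give $V_{k+1}\le V^*$; and the bound passes to the fixed point $\hat V$.
A sup-norm version at the joint fixed point also works: with $M=\max_s(\hat V(s)-V^*(s))$, the same chain gives $M\le(1-p)\gamma\max(M,0)$, hence $M\le 0$. You need one of these.

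Separately, your planned rigorous justification of the mixture inequality cannot succeed, because the inequality is false exactly where it matters. In your first-order condition the expectile residual carries weight $|\tau-\mathbb{I}(Q<V)|\le\tau<1$ (for $\tau\ge 1/2$), while the squared term carries weight $1$, so the squared term is the stiffer one. Set $V_c=(1-p)V^\tau_{\mathcal{D}}+p\bar V_\pi$. Whenever $\bar V_\pi>V^\tau_{\mathcal{D}}$ and $0<p<1$, your left-hand side at $V_c$ is at least $p(1-p)(1-\tau)(\bar V_\pi-V^\tau_{\mathcal{D}})>0$, so $\hat V(s)>V_c$ strictly. For example, dataset $Q\equiv 0$, policy $Q\equiv 1$, $\tau=0.8$, $p=0.5$ gives $\hat V=5/6$ rather than $1/2$.

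The paper does not derive the mixture either. It simply writes it as the stationary point ``by setting the gradient to zero,'' which is exactly your fallback. So with that fallback, plus the induction above, you reproduce the paper's proof, caveat included.

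If you want a statement that holds for the exact minimizer, run the monotonicity test from your first paragraph directly at $V=V^*(s)$. Once $Q_k\le Q^*$ on dataset pairs, every dataset $Q$-value lies strictly below $V^*(s)$. The expectile residual there then equals $(1-\tau)(\mathbb{E}_{\mathcal{D}}[Q_k]-V^*(s))\le-(1-\tau)(\delta_\tau+\delta_{sub})$, using that the mean is at most the $\tau$-expectile for $\tau\ge 1/2$. This yields $\hat V(s)\le V^*(s)$ under $p\,(V_{\max}-V^*(s))\le(1-p)(1-\tau)(\delta_\tau+\delta_{sub})$. The stated bound is equivalent to $p\,(V_{\max}-V^*(s))\le(1-p)(\delta_\tau+\delta_{sub})$, so the rigorous condition is stricter by the factor $1-\tau$.
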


\begin{proof}
See Appendix~\ref{app:maxium_bound}.
\end{proof}

Theorem~\ref{thm:ISEP-safety} provides a principled guideline for selecting the parameter \( p \) based on dataset characteristics. The derived bound reveals that a more aggressive strategy—selecting a larger \( p \)—is permissible when the dataset exhibits high suboptimality (i.e., a larger value of \( \delta_{sub} \)). Conversely, for datasets of higher quality, a more conservative approach is required, necessitating a value of \( p \) closer to $0$ to prioritize in-distribution support.

This theoretical result highlights ISEP's adaptive nature: by adjusting $p$, we can control the trade-off between conservative value estimation and policy improvement based on dataset quality. The bound ensures that even when incorporating policy-generated actions through the $p$ term, ISEP's value estimates remain bounded by optimal state value.

%%%%%%%%%%%%%%%%%%%%
%%%%%%%%%%%%%%%%%%%%%%%%%%%%%%%%%%%%

\begin{table*}[t]
\centering

\setlength{\tabcolsep}{4.5pt}
\caption{Normalized scores on MuJoCo locomotion tasks. Best overall scores are shown in \textbf{bold}; best scores among methods not using diffusion/flow matching are \underline{underlined}. Averaged over 5 seeds. Baseline results are from original publications.}
\begin{tabular}{l|c|c|c|c|c|c|c||c|c}
\hline
\textbf{Environment} & \textbf{BC} & \textbf{CQL} & \textbf{IQL} & \textbf{DQL} & \textbf{IDQL} & \textbf{QGPO} & \textbf{QIPO} & \textbf{ISEP} & \textbf{ISEP-FM}\\
\hline
halfcheetah-med & 42.6 & 44.4 & 47.5 & 51.1 & 49.7 & 54.1 & 54.1 & \underline{49.3 $\pm$ 0.2} & \textbf{59.9 $\pm$ 0.3}\\
hopper-med & 52.9 & 79.2 & 67.0 & 90.5 & 63.1 & 98.0 & 94.0 & \underline{79.1$\pm$3.5} & \textbf{101.3 $\pm$ 2.1} \\
walker2d-med & 75.3 & 58.0 & 78.3 & 87.0 & 80.2 & 86.0 & 87.6 & \underline{82.1$\pm$0.4} & \textbf{90.4 $\pm$ 1.1}\\
\hline
halfcheetah-med-rep & 36.6 & 45.5 & 44.5 & 47.8 & 45.1 & 47.6 & 48.0 & \underline{46.3$\pm$0.1} & \textbf{54.6 $\pm$ 0.5}\\
hopper-med-rep & 18.1 & 95.0 & 94.7 & 101.3 & 82.4 & 96.9 & 101.2 & \underline{102.7$\pm$0.6} & \textbf{104.5 $\pm$ 0.8}\\
walker2d-med-rep & 26.0 & 77.2 & 73.9 & \textbf{95.5} & 79.8 & 84.4 & 90.1 & \underline{82.0$\pm$3.6} & 92.7 $\pm$ 1.5\\
\hline
halfcheetach-med-exp & 55.2 & 62.4 & 86.7 & 94.4 & 90.3 & 93.5 & 94.4 & \underline{92.4$\pm$0.3} &  \textbf{95.7 $\pm$ 0.2}\\
hopper-med-exp & 52.5 & 111.0 & 91.5 & 105.3 & 111.9 & 108.0 & 112.1 & \underline{112.0$\pm$0.2} & \textbf{112.3 $\pm$ 0.3}\\
walker2d-med-exp & 107.5 & 98.7 & 109.6 & 111.6 & 111.2 & 110.7 & 110.8 & \underline{110.2$\pm$0.5} & \textbf{112.5 $\pm$ 0.4}\\
\hline
\textbf{locomotion total} & 51.9 & 74.6 & 77.0 & 88.0  & 79.1 & 86.6 & 88.1 & \underline{84.0} & \textbf{91.5}\\
\hline
\end{tabular}
\label{tab:1}
\end{table*}

\begin{table*}[t]
\centering
\caption{Normalized scores on MuJoCo locomotion tasks. Best overall scores are shown in \textbf{bold}; best scores among methods not using diffusion/flow matching are \underline{underlined}. Averaged over 5 seeds. Results of baseline methods are taken from their original papers, except IQL~\cite{iql} results obtained by running official code.}
\begin{tabular}{l|c|c|c|c||c|c}
\hline
\textbf{Environment} & \textbf{BC} & \textbf{CQL} & \textbf{IQL} & \textbf{DQL} & \textbf{ISEP} & \textbf{ISEP-FM}  \\
\hline
pen-human & 25.8 & 74.0 & 71.5 & 72.8  & 80.9 $\pm$ 1.1   & \textbf{82.7 $\pm$ 0.7} \\
pen-cloned & 38.3 & 40.3 & 79.5 & 57.3 & 84.5 $\pm$ 2.3  & \textbf{85.8 $\pm$ 1.6}\\
\hline
\textbf{adroit total} & 32.1 & 57.1 & 75.5 & 65.1 & 82.7 & \textbf{84.3}\\
\hline
kitchen-complete & 33.8 & 43.8 & 65.0 & 84.0 &  70.5 $\pm$ 4.2 & \textbf{87.5 $\pm$ 3.7}\\
kitchen-partial & 33.8 & 49.8 & 46.3 & 60.5 &  47.2 $\pm$ 3.6 & \textbf{70.6 $\pm$ 2.2}\\
kitchen-mixed & 47.5 & 51.0 & 51.0 & 62.6 &  58.5 $\pm$ 1.2 & \textbf{66.2 $\pm$ 1.5}\\
\hline
\textbf{kitchen total} & 38.4 &48.2  & 54.1 & 69.0 & \underline{58.6} & \textbf{74.8} \\
\hline
\end{tabular}
\label{tab:2}
\end{table*}

\section{Experiments}
We evaluate our method by comparing it with prior offline RL approaches on the D4RL benchmark~\cite{d4rl}. Additionally, we conduct empirical studies to investigate how the Bernoulli parameter influences the trade-off between safety and extrapolation in offline RL. Finally, we perform an ablation study on the two main components of our algorithm to assess their individual contributions to overall performance.

\subsection{Comparison on Offline RL Benchmarks}
We evaluate the performance of our methods against prior offline RL methods using the D4RL benchmark suite~\cite{d4rl}, with results presented in Table~\ref{tab:1} and Table~\ref{tab:2}. The benchmark includes tasks from three domains: Gym-MuJoCo locomotion, Adroit robotic hand manipulation, and Kitchen sequential object manipulation.

\subsubsection{Datasets}
We evaluate our methods against prior offline RL algorithms using the D4RL benchmark suite~\cite{d4rl}, with results presented in Table~\ref{tab:1} and Table~\ref{tab:2}. The benchmark encompasses three challenging domains: Gym-MuJoCo locomotion tasks with continuous control in relatively well-covered state-action spaces; Adroit robotic hand manipulation with high-dimensional control from human demonstrations requiring strong policy regularization to remain within narrow data support; and Kitchen sequential object manipulation emphasizing long-horizon planning and temporal credit assignment from sparse, human-collected demonstrations.

\subsubsection{Implementation Details}
We employ a multi-layer perceptron (MLP) for the Gaussian policy and an MLP with Mish activation \cite{mish} as the backbone for the flow-based policy. Training is conducted for 2000 epochs on Gym tasks and 1000 epochs on Adroit and Kitchen. Each epoch comprises 1000 training steps with a batch size of 256. Optimization is performed using the Adam optimizer \cite{adam}. For evaluation, we generate 20 rollout trajectories for Gym tasks and 50 trajectories for both Adroit and Kitchen tasks. Final results are reported as the average performance over five random seeds. A comprehensive list of task-specific hyperparameters, including the interpolation parameter $p$, expectile values $\tau$, and temperature $\beta$ or guidance weight ($w$), are provided in Appendix \ref{sec:appendix_hyperparams}.

\subsubsection{Baseline} Our evaluation compares our methods with several representative offline RL baselines, including CQL~\cite{cql}, IQL~\cite{iql}, Diffusion-QL (DQL)~\cite{diffusion_ql}, IDQL~\cite{idql}, QGPO~\cite{qgpo}, QIPO~\cite{qipo} and behavioral cloning (BC). Results of baseline methods are taken either from their original papers or running their official code.

The comparative results for ISEP and ISEP-FM against these baselines are presented in Table~\ref{tab:1} for Gym-MuJoCo locomotion tasks and Table~\ref{tab:2} for the Adroit and Kitchen domains.

\subsection{Runtime}
All experiments were run on an NVIDIA RTX 4090 GPU. Our Gaussian variant (ISEP) trains in  1.5-2 hours (1.5M steps) to converge, matching the efficiency of IQL~\cite{iql}. The more expressive flow variant (ISEP‑FM) requires approximately 4–5 hours (2M steps) to converge, which is slower but remains faster than full diffusion-based methods.

\subsection{Ablation Study}
\begin{figure}[htbp]
    \centering
    \begin{subfigure}[t]{0.24\linewidth}
        \centering
        \includegraphics[width=\linewidth]{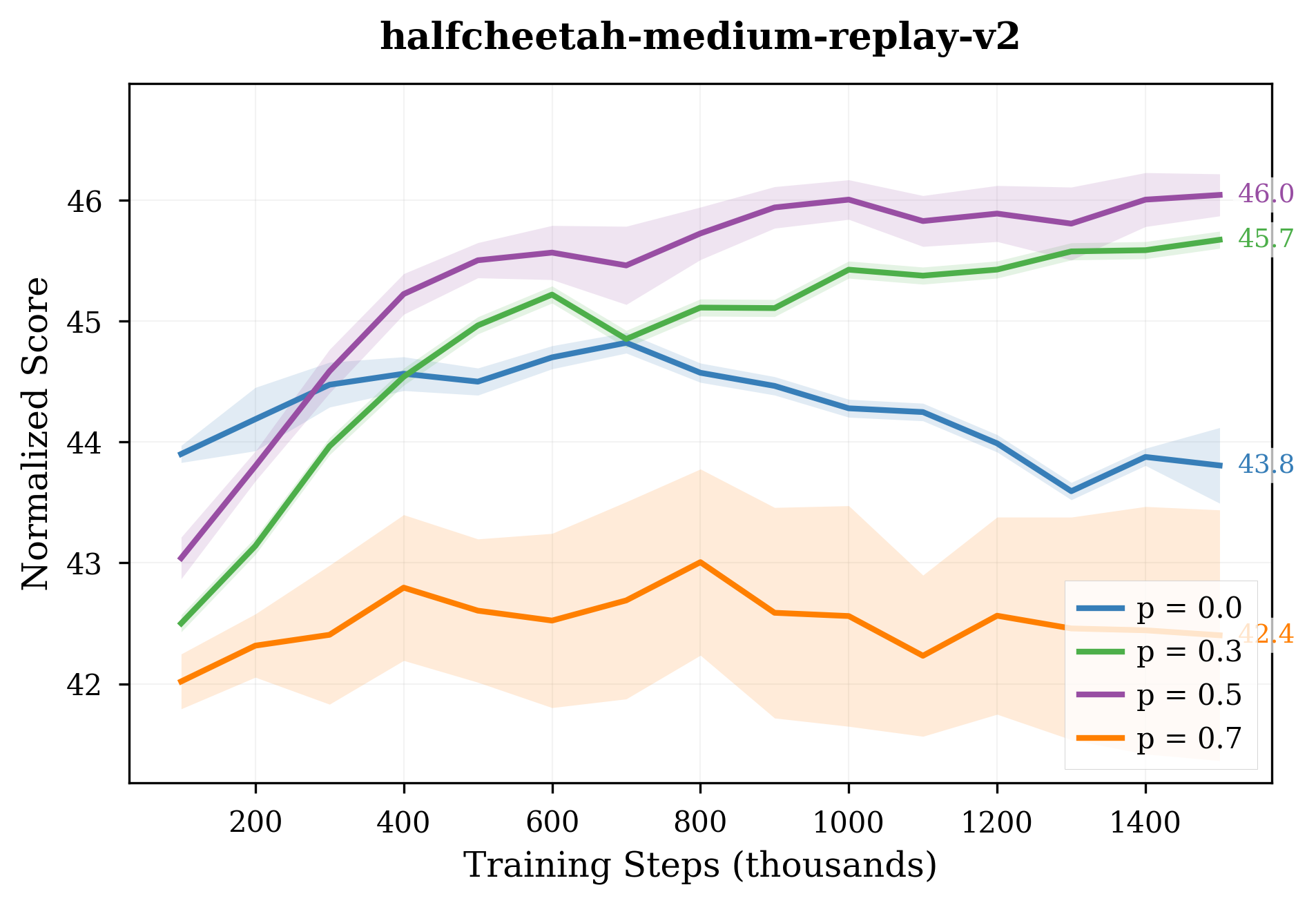}
        \label{fig:p_halfcheetah_replay}
    \end{subfigure}
    \hfill
    \begin{subfigure}[t]{0.24\linewidth}
        \centering
        \includegraphics[width=\linewidth]{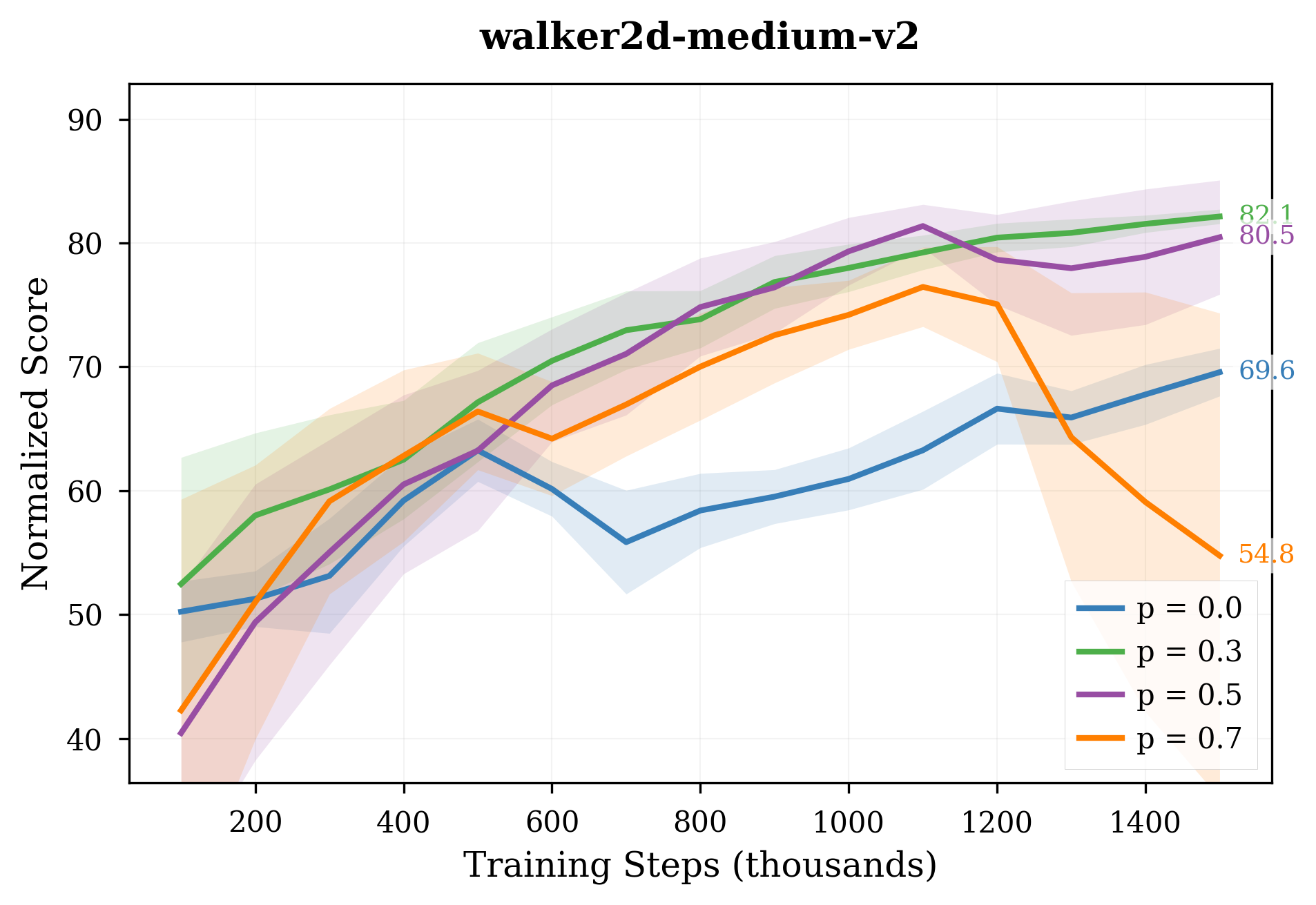}
        \label{fig:p_walker2d}
    \end{subfigure}
    \hfill
    \begin{subfigure}[t]{0.24\linewidth}
        \centering
        \includegraphics[width=\linewidth]{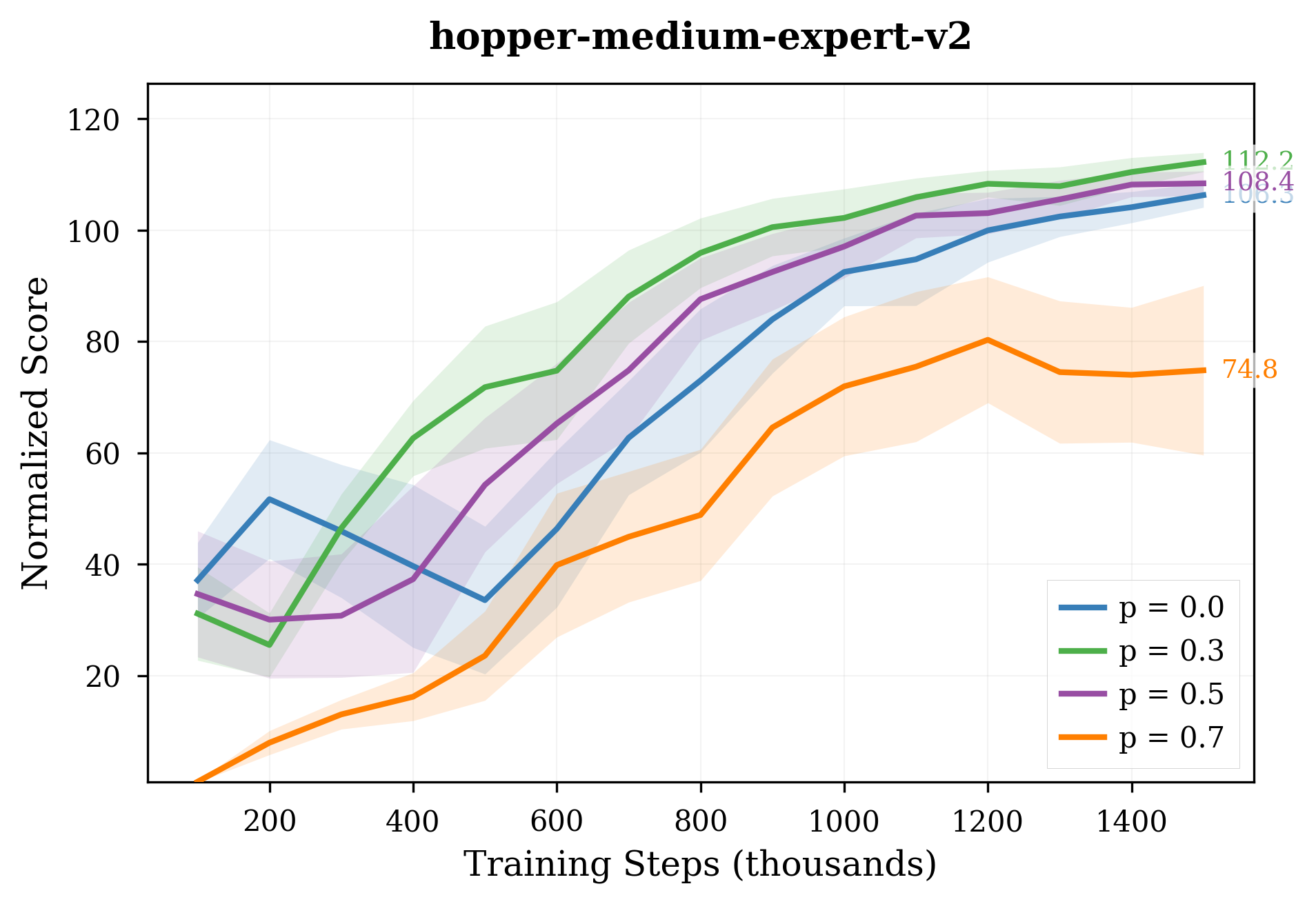}
        \label{fig:p_hopper}
    \end{subfigure}
    \hfill
    \begin{subfigure}[t]{0.24\linewidth}
        \centering
        \includegraphics[width=\linewidth]{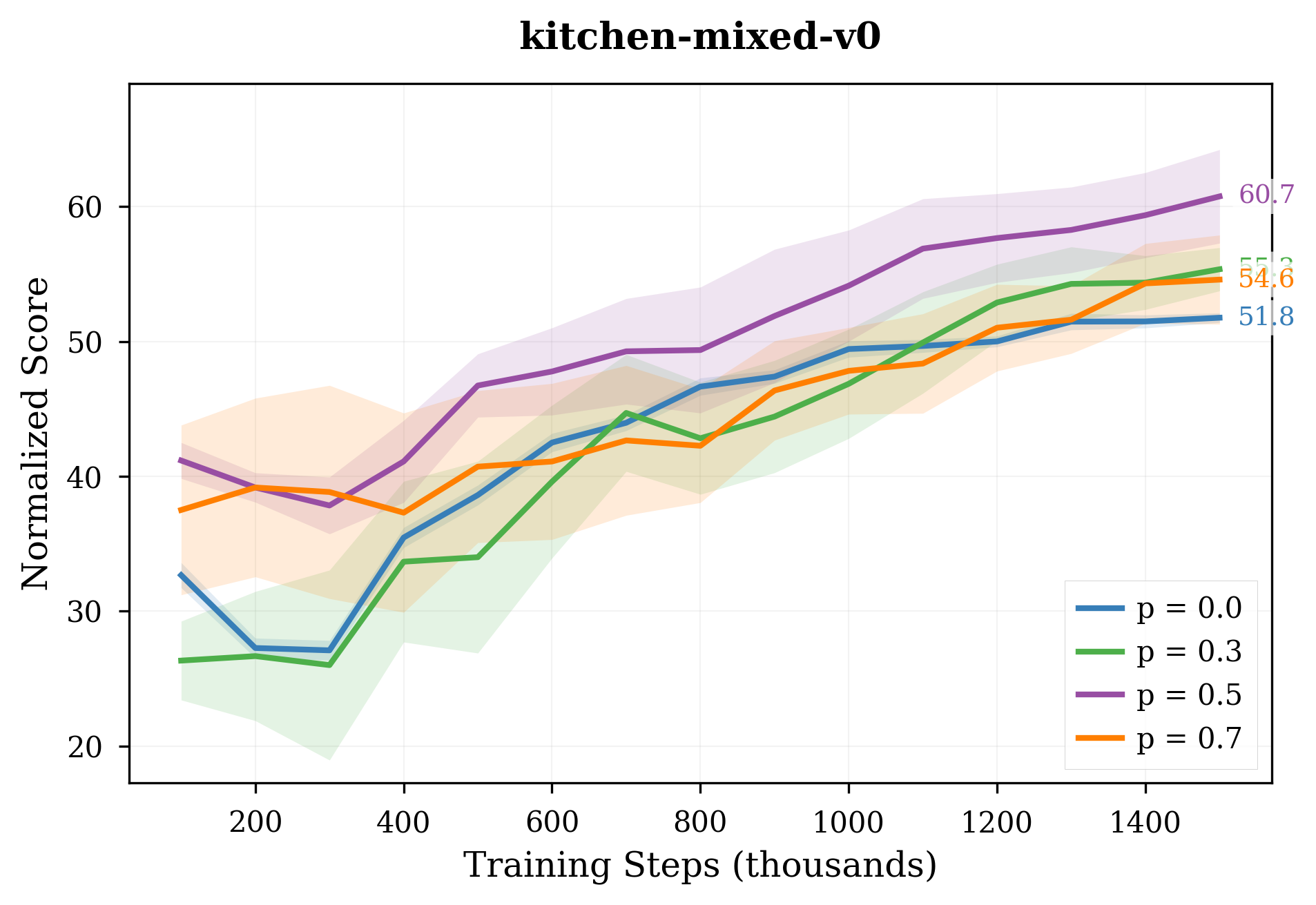}
        \label{fig:p_kitchen}
    \end{subfigure}

    \caption{\textbf{Sensitivity analysis of the interpolation parameter $p$ across representative D4RL tasks.}
    Each subplot shows the D4RL Normalized Score over training steps for different values of $p$,
    spanning locomotion and manipulation
    domains. Moderate values ($p \in \{0.3, 0.5\}$) consistently yield
    the highest normalized scores across all environments. Extreme values ($p \ge 0.9$) diverge
    and are omitted for visual clarity.
    Solid lines show the mean across seeds with EMA smoothing (weight $= 0.6$);
    shaded regions denote $\pm 1$ SEM.}
    \label{fig:bernoulli_locomotion}
\end{figure}

\begin{figure}[h]
\centering
\includegraphics[width=0.5\textwidth]{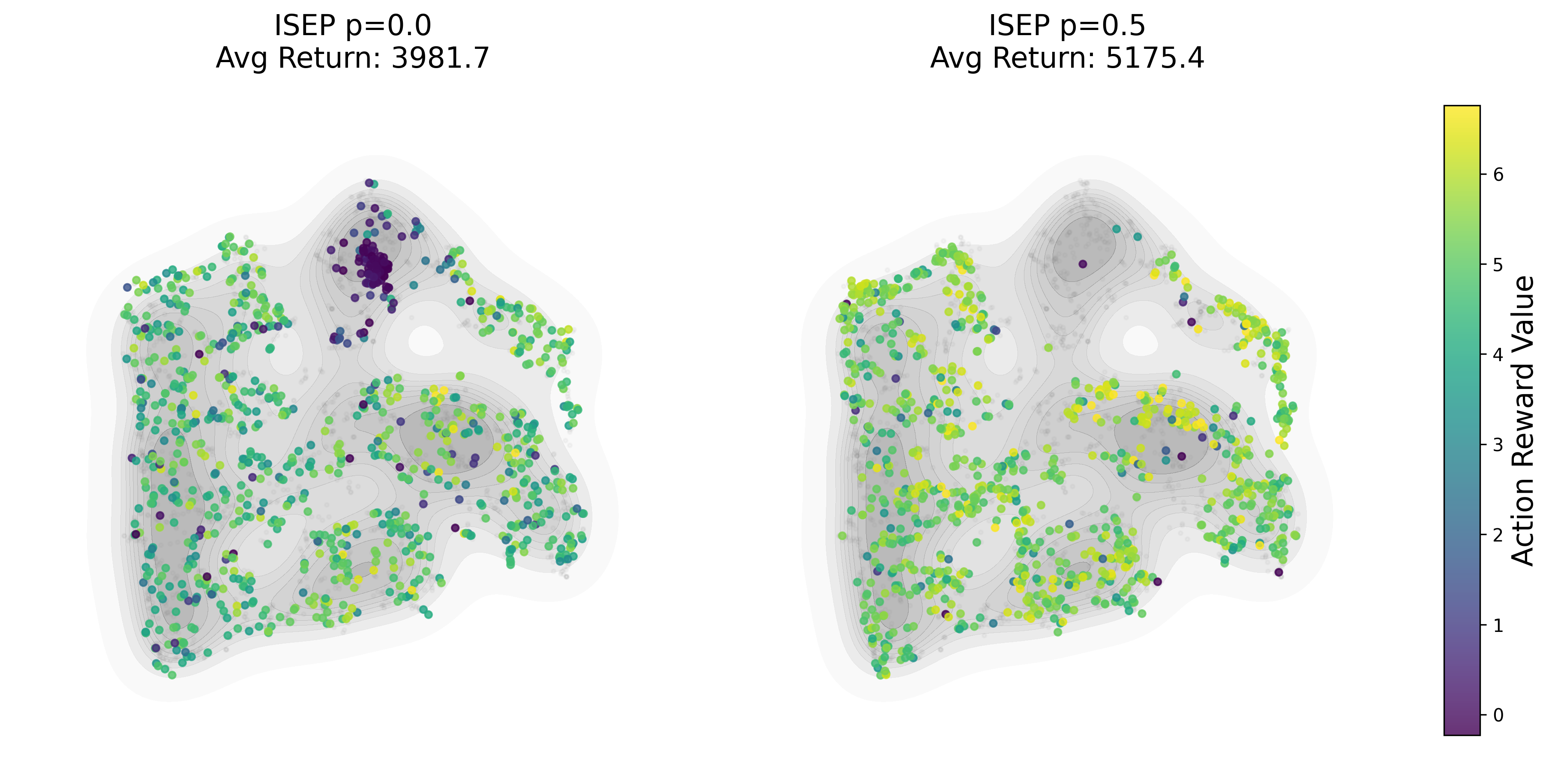}%
 \caption{\textbf{t-SNE Visualization of Action Support Expansion on \texttt{halfcheetah-medium-replay-v2}.}     We project sampled actions into 2D space, with color contours indicating reward magnitude (brighter colors denote higher rewards).     \textbf{Left:} The strictly in-distribution baseline ($p=0.0$) remains trapped in dominant low-reward regions due to dataset density bias.\textbf{Right:} ISEP ($p=0.5$) exhibits \textit{implicit action support expansion}, successfully escaping the low-reward cluster to cover sparse, high-value regions specific to this environment.}    \label{fig:halfcheetah-ood}
\end{figure}

In this section, we analyze the source of ISEP's performance gains on D4RL tasks. We dissect the two core contributions of our method: the Bernoulli-gated value interpolation mechanism and the flow-based policy extraction via CFGRL.

\subsubsection{Effect of the Interpolation Parameter $p$}
We evaluate the sensitivity of the interpolation parameter $p$ across a diverse suite of D4RL tasks, encompassing both locomotion (\subref{fig:p_halfcheetah_replay}--\subref{fig:p_hopper}) and manipulation (\subref{fig:p_kitchen}) domains. As illustrated in Figure \ref{fig:bernoulli_locomotion}, moderate values of the interpolation parameter, specifically $p \in \{0.3, 0.5\}$, consistently yield the highest normalized scores across all tested environments. These balanced settings significantly outperform the purely conservative baseline ($p=0.0$) by enabling the policy to leverage higher-value regions slightly beyond the immediate dataset support. 

However, we observe that excessive optimism is detrimental; configurations with $p \ge 0.9$ are omitted from the plots as they lead to catastrophic divergence and negative scores across all tasks. This failure stems from unconstrained extrapolation in the value function, which destabilizes training. These results reinforce the necessity of a "middle-ground" approach: while some degree of optimism is required to improve upon the behavior policy, the learning process must remain sufficiently anchored to the data distribution to maintain numerical stability.

\subsubsection{Implicit Action Support Expansion}
We analyze the ability of ISEP to perform \textit{implicit action support expansion} by visualizing the learned action space on \texttt{halfcheetah-medium-replay-v2} using t-SNE~\cite{tsne} as shown in Figure \ref{fig:halfcheetah-ood}. A critical limitation of strictly in-distribution methods ($p=0.0$, equivalent to IQL) is their susceptibility to density bias: even when using expectile regression to target high returns, the policy is inevitably "dragged" towards suboptimal modes if they dominate the dataset density. In contrast, ISEP ($p=0.5$) utilizes interpolated samples to locally expand the support of high-quality data. This effectively liberates the policy from the gravitational pull of the dense, low-reward regions found in the replay buffer, allowing it to shift probability mass towards sparse, high-value actions that baselines fail to capture.

\subsubsection{Stochastic Selection vs. Deterministic Interpolation}
\label{sec:ablation_action_mix}
Another core design choice in ISEP is the use of stochastic \textit{action switching} rather than averaging. To validate this, we compare ISEP against a \textit{Deterministic Action Interpolation} baseline on \texttt{halfcheetah-medium-replay-v2} and \texttt{walker2d-medium-v2}. In this baseline, the policy update uses interpolation of $a_D$ and $a_\pi$, rather than stochastic selection.

As shown in Figure \ref{fig:stochastic-action-ablation}, the deterministic baseline suffers a distinct performance drop compared to ISEP. We attribute this to the "off-manifold" problem inherent in naive averaging. In complex locomotion tasks, the valid action space is often non-convex; a linear combination of two valid actions (e.g., two distinct gaits) frequently results in a physically unstable or invalid posture. By employing stochastic selection, ISEP ensures that the Q-function is queried only with realizable actions from either the policy or the dataset, thereby avoiding mode collapse.

\begin{figure}[h]
\centering
\begin{subfigure}[b]{0.35\textwidth}
    \centering
    \includegraphics[width=\linewidth]{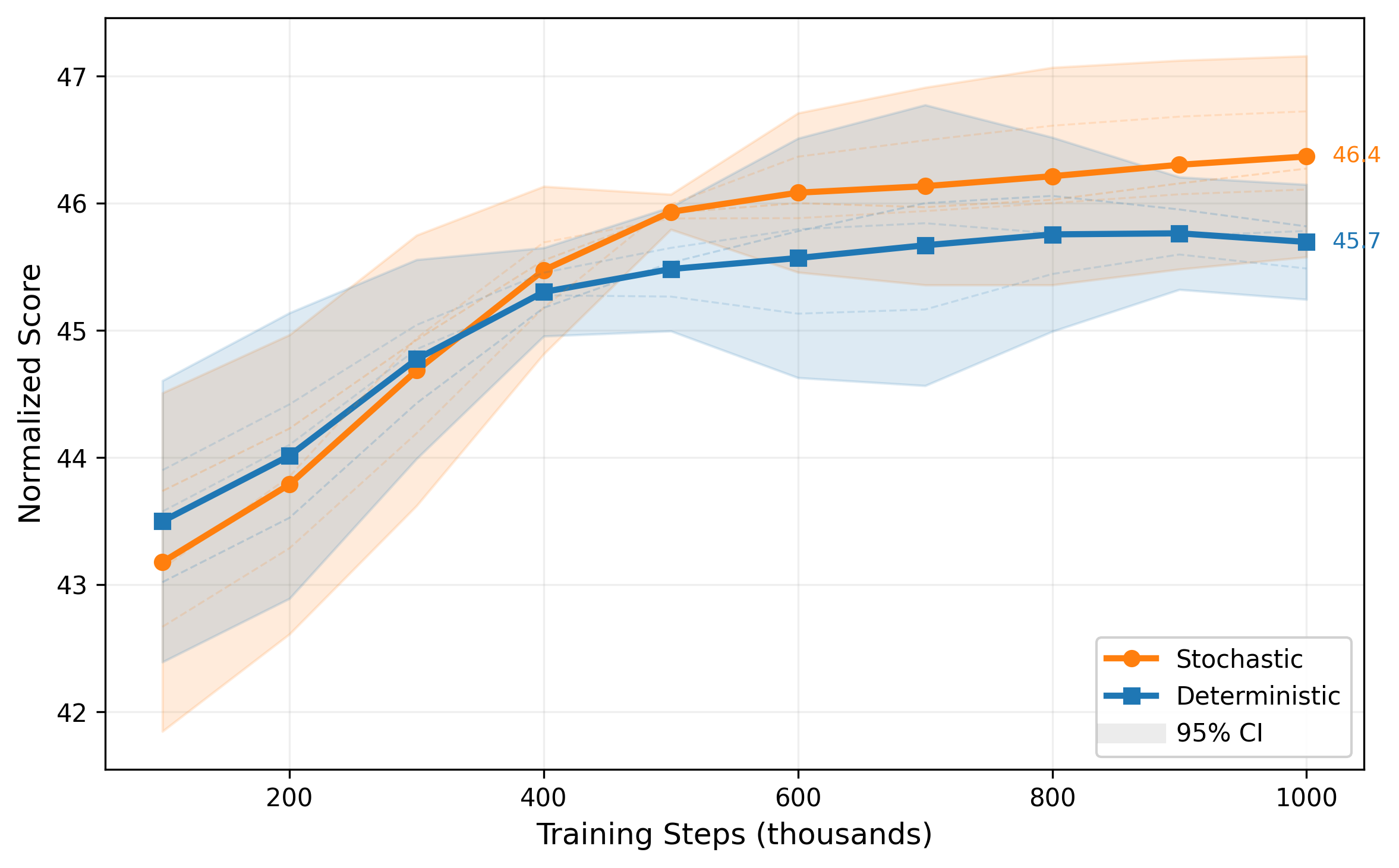}
    \caption{halfcheetah-medium-replay-v2}
    \label{fig:ablation-cheetah}
\end{subfigure}
\hspace{0.05\textwidth}
\begin{subfigure}[b]{0.35\textwidth}
    \centering
    \includegraphics[width=\linewidth]{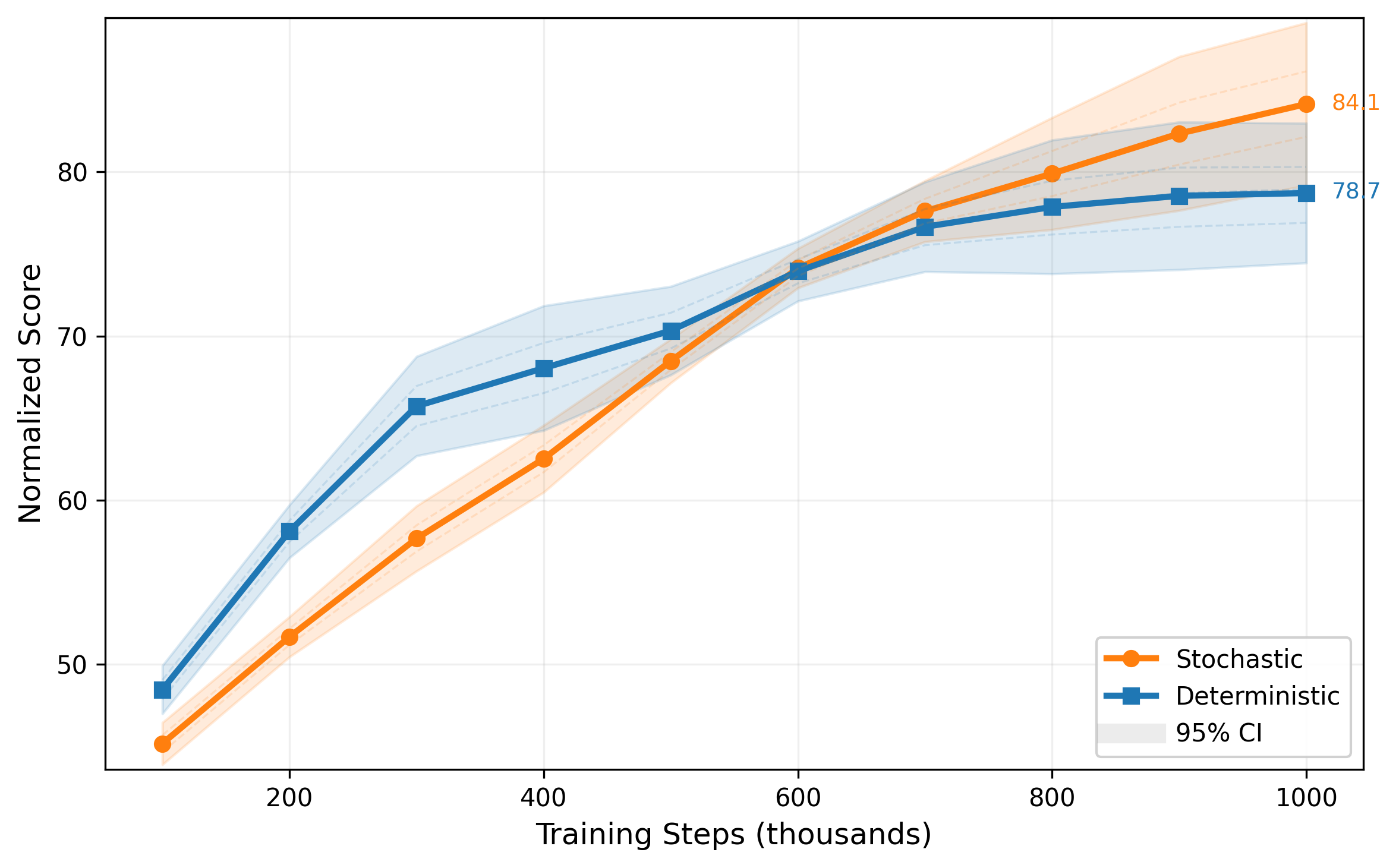}
    \caption{walker2d-medium-v2}
    \label{fig:ablation-walker}
\end{subfigure}
\caption{\textbf{Stochastic Selection vs. Deterministic Interpolation.} Ablation study comparing ISEP (labeled \textit{Stochastic}) against naive action interpolation (labeled \textit{Deterministic}) on (a) halfcheetah-medium-replay and (b) walker2d-medium. The deterministic baseline underperforms because linear averaging between distinct modes often yields "off-manifold" actions. In contrast, stochastic action selection maintains valid behavioral modes, leading to consistent performance gains.}
\label{fig:stochastic-action-ablation}
\end{figure}

\section{Conclusion}
In this work, we presented the Implicit Support Expansion via stochastic Policy optimization (ISEP), a framework designed to safely extrapolate beyond static datasets in offline RL. ISEP balances conservatism and optimism through three coupled innovations: (1) a hybrid value objective that dynamically expands support into high-reward regions; (2) a stochastic action selection strategy that prevents mode collapse in non-convex landscapes; and (3) an instantiation using Conditional Flow Matching to capture the resulting multimodal policy distributions. We theoretically demonstrated that our interpolation parameter $p$ serves as a rigorous safety mechanism, guaranteeing bounded value divergence. Our results suggest that by combining support expansion with the expressivity of modern generative models, offline RL agents can effectively discover superior actions without sacrificing safety.
%Bibliography
\bibliographystyle{unsrt}  
\bibliography{references}  

\begin{thebibliography}{10}

\bibitem{fujimoto2019offpolicydeepreinforcementlearning}
Scott Fujimoto, David Meger, and Doina Precup.
\newblock Off-policy deep reinforcement learning without exploration, 2019.

\bibitem{offlinerl_survey}
Sergey Levine, Aviral Kumar, George Tucker, and Justin Fu.
\newblock Offline reinforcement learning: Tutorial, review, and perspectives on open problems.
\newblock {\em CoRR}, abs/2005.01643, 2020.

\bibitem{offlinerl_survey2}
Rafael Figueiredo~Prudencio, Marcos R. O.~A. Maximo, and Esther~Luna Colombini.
\newblock A survey on offline reinforcement learning: Taxonomy, review, and open problems.
\newblock {\em IEEE Transactions on Neural Networks and Learning Systems}, 35(8):10237–10257, August 2024.

\bibitem{iql}
Ilya Kostrikov, Ashvin Nair, and Sergey Levine.
\newblock Offline reinforcement learning with implicit q-learning, 2021.

\bibitem{eql}
Divyansh Garg, Joey Hejna, Matthieu Geist, and Stefano Ermon.
\newblock Extreme q-learning: Maxent rl without entropy, 2023.

\bibitem{sql}
Haoran Xu, Li~Jiang, Jianxiong Li, Zhuoran Yang, Zhaoran Wang, Victor Wai~Kin Chan, and Xianyuan Zhan.
\newblock Offline rl with no ood actions: In-sample learning via implicit value regularization, 2023.

\bibitem{bcq}
Scott Fujimoto, David Meger, and Doina Precup.
\newblock Off-policy deep reinforcement learning without exploration, 2019.

\bibitem{td3bc}
Scott Fujimoto and Shixiang~Shane Gu.
\newblock A minimalist approach to offline reinforcement learning, 2021.

\bibitem{bear}
Aviral Kumar, Justin Fu, George Tucker, and Sergey Levine.
\newblock Stabilizing off-policy q-learning via bootstrapping error reduction.
\newblock {\em CoRR}, abs/1906.00949, 2019.

\bibitem{spot}
Jialong Wu, Haixu Wu, Zihan Qiu, Jianmin Wang, and Mingsheng Long.
\newblock Supported policy optimization for offline reinforcement learning, 2022.

\bibitem{brac}
Yifan Wu, George Tucker, and Ofir Nachum.
\newblock Behavior regularized offline reinforcement learning, 2019.

\bibitem{cql}
Aviral Kumar, Aurick Zhou, George Tucker, and Sergey Levine.
\newblock Conservative q-learning for offline reinforcement learning.
\newblock {\em CoRR}, abs/2006.04779, 2020.

\bibitem{cfgrl}
Kevin Frans, Seohong Park, Pieter Abbeel, and Sergey Levine.
\newblock Diffusion guidance is a controllable policy improvement operator, 2025.

\bibitem{onestep}
David Brandfonbrener, William~F. Whitney, Rajesh Ranganath, and Joan Bruna.
\newblock Offline rl without off-policy evaluation, 2021.

\bibitem{insample3}
Haoran Xu, Li~Jiang, Jianxiong Li, Zhuoran Yang, Zhaoran Wang, Victor Wai~Kin Chan, and Xianyuan Zhan.
\newblock Offline rl with no ood actions: In-sample learning via implicit value regularization, 2023.

\bibitem{insample4}
Divyansh Garg, Joey Hejna, Matthieu Geist, and Stefano Ermon.
\newblock Extreme q-learning: Maxent rl without entropy, 2023.

\bibitem{mild2}
Yixiu Mao, Qi~Wang, Yun Qu, Yuhang Jiang, and Xiangyang Ji.
\newblock Doubly mild generalization for offline reinforcement learning, 2024.

\bibitem{counterfactualconservativeqlearning}
Jianzhun Shao, Yun Qu, Chen Chen, Hongchang Zhang, and Xiangyang Ji.
\newblock Counterfactual conservative q learning for offline multi-agent reinforcement learning, 2023.

\bibitem{adversariallytrainedactorcritic}
Ching-An Cheng, Tengyang Xie, Nan Jiang, and Alekh Agarwal.
\newblock Adversarially trained actor critic for offline reinforcement learning, 2022.

\bibitem{uncertaintybasedofflinereinforcementlearning}
Gaon An, Seungyong Moon, Jang-Hyun Kim, and Hyun~Oh Song.
\newblock Uncertainty-based offline reinforcement learning with diversified q-ensemble, 2021.

\bibitem{pessimisticbootstrappinguncertaintydrivenoffline}
Chenjia Bai, Lingxiao Wang, Zhuoran Yang, Zhihong Deng, Animesh Garg, Peng Liu, and Zhaoran Wang.
\newblock Pessimistic bootstrapping for uncertainty-driven offline reinforcement learning, 2022.

\bibitem{supportedtrustregionoptimization}
Yixiu Mao, Hongchang Zhang, Chen Chen, Yi~Xu, and Xiangyang Ji.
\newblock Supported trust region optimization for offline reinforcement learning, 2023.

\bibitem{ddpm}
Jonathan Ho, Ajay Jain, and Pieter Abbeel.
\newblock Denoising diffusion probabilistic models, 2020.

\bibitem{diffusion_former}
Jascha Sohl-Dickstein, Eric~A. Weiss, Niru Maheswaranathan, and Surya Ganguli.
\newblock Deep unsupervised learning using nonequilibrium thermodynamics, 2015.

\bibitem{flowmatching}
Yaron Lipman, Ricky T.~Q. Chen, Heli Ben-Hamu, Maximilian Nickel, and Matt Le.
\newblock Flow matching for generative modeling, 2023.

\bibitem{visuomotorpolicy}
Cheng Chi, Zhenjia Xu, Siyuan Feng, Eric Cousineau, Yilun Du, Benjamin Burchfiel, Russ Tedrake, and Shuran Song.
\newblock Diffusion policy: Visuomotor policy learning via action diffusion, 2024.

\bibitem{reinflow}
Tonghe Zhang, Chao Yu, Sichang Su, and Yu~Wang.
\newblock Reinflow: Fine-tuning flow matching policy with online reinforcement learning, 2025.

\bibitem{diffusion_ql}
Zhendong Wang, Jonathan~J Hunt, and Mingyuan Zhou.
\newblock Diffusion policies as an expressive policy class for offline reinforcement learning, 2023.

\bibitem{flowqlearning}
Seohong Park, Qiyang Li, and Sergey Levine.
\newblock Flow q-learning, 2025.

\bibitem{Ada_2024}
Suzan~Ece Ada, Erhan Oztop, and Emre Ugur.
\newblock Diffusion policies for out-of-distribution generalization in offline reinforcement learning.
\newblock {\em IEEE Robotics and Automation Letters}, 9(4):3116–3123, April 2024.

\bibitem{ding2024diffusionbasedreinforcementlearningqweighted}
Shutong Ding, Ke~Hu, Zhenhao Zhang, Kan Ren, Weinan Zhang, Jingyi Yu, Jingya Wang, and Ye~Shi.
\newblock Diffusion-based reinforcement learning via q-weighted variational policy optimization, 2024.

\bibitem{kang2023efficientdiffusionpoliciesoffline}
Bingyi Kang, Xiao Ma, Chao Du, Tianyu Pang, and Shuicheng Yan.
\newblock Efficient diffusion policies for offline reinforcement learning, 2023.

\bibitem{qgpo}
Cheng Lu, Huayu Chen, Jianfei Chen, Hang Su, Chongxuan Li, and Jun Zhu.
\newblock Contrastive energy prediction for exact energy-guided diffusion sampling in offline reinforcement learning, 2023.

\bibitem{qipo}
Shiyuan Zhang, Weitong Zhang, and Quanquan Gu.
\newblock Energy-weighted flow matching for offline reinforcement learning, 2025.

\bibitem{idql}
Philippe Hansen-Estruch, Ilya Kostrikov, Michael Janner, Jakub~Grudzien Kuba, and Sergey Levine.
\newblock Idql: Implicit q-learning as an actor-critic method with diffusion policies, 2023.

\bibitem{sfbc}
Huayu Chen, Cheng Lu, Chengyang Ying, Hang Su, and Jun Zhu.
\newblock Offline reinforcement learning via high-fidelity generative behavior modeling, 2023.

\bibitem{awr1}
Jan Peters and Stefan Schaal.
\newblock Reinforcement learning by reward-weighted regression for operational space control.
\newblock In {\em Proceedings of the 24th international conference on Machine learning}, pages 745--750, 2007.

\bibitem{awr2}
Qing Wang, Jiechao Xiong, Lei Han, Peng Sun, Han Liu, and Tong Zhang.
\newblock Exponentially weighted imitation learning for batched historical data.
\newblock In {\em Advances in Neural Information Processing Systems (NeurIPS)}, volume~31, pages 6291--6300, 2018.

\bibitem{awr}
Xue~Bin Peng, Aviral Kumar, Grace Zhang, and Sergey Levine.
\newblock Advantage-weighted regression: Simple and scalable off-policy reinforcement learning.
\newblock {\em CoRR}, abs/1910.00177, 2019.

\bibitem{d4rl}
Justin Fu, Aviral Kumar, Ofir Nachum, George Tucker, and Sergey Levine.
\newblock {D4RL:} datasets for deep data-driven reinforcement learning.
\newblock {\em CoRR}, abs/2004.07219, 2020.

\bibitem{florence2021implicitbehavioralcloning}
Pete Florence, Corey Lynch, Andy Zeng, Oscar Ramirez, Ayzaan Wahid, Laura Downs, Adrian Wong, Johnny Lee, Igor Mordatch, and Jonathan Tompson.
\newblock Implicit behavioral cloning, 2021.

\bibitem{li2022softmaxpolicygradientmethods}
Gen Li, Yuting Wei, Yuejie Chi, and Yuxin Chen.
\newblock Softmax policy gradient methods can take exponential time to converge, 2022.

\bibitem{mish}
Diganta Misra.
\newblock Mish: A self regularized non-monotonic activation function, 2020.

\bibitem{adam}
Diederik~P. Kingma and Jimmy Ba.
\newblock Adam: A method for stochastic optimization, 2017.

\bibitem{tsne}
Laurens van~der Maaten and Geoffrey Hinton.
\newblock Visualizing data using t-sne.
\newblock {\em Journal of Machine Learning Research}, 9(86):2579--2605, 2008.

\bibitem{sgd}
Sebastian Ruder.
\newblock An overview of gradient descent optimization algorithms, 2017.

\end{thebibliography}

\newpage
\appendix
\section{Theoretical analysis}\label{app:maxium_bound}
We proceed by induction on the value iteration steps.

\textbf{Base Case:}  
Initialize \(V_0(s) = 0\) (or any arbitrarily small value such that \(V_0(s) \leq V^*(s)\)). The base case holds trivially.

\textbf{Inductive Step:}  
Assume that at iteration \(k\), the value function satisfies \(V_k(s) \leq V^*(s)\) for all \(s \in S\). We examine the update for iteration \(k+1\).  
By setting the gradient of Equation~\ref{eq:value_loss} w.r.t $V_{\psi}$ to $0$, we get:
\begin{equation}
    V_{k+1}(s) = (1-p)\mathbb{E}^\tau_{a\sim D(s)}[Q_k(s,a)] + p\mathbb{E}_{\hat{a}\sim\pi_\phi(\hat{a}|s)}[Q_k(s,\hat{a})]
\end{equation}
where $\mathbb{E}^{\tau}$ denotes the $\tau$-expectile operator.
We analyze the two terms on the right-hand side separately.

\noindent\textbf{1. Expectile Term (In-Sample):}  
First, we bound the updated \(Q\)-values.
\begin{align}
Q_{k}(s,a) &= \mathbb{E}_{(r,s')\sim D(s,a)}\big[r + \gamma V_k(s')\big] \\
&\leq \mathbb{E}_{(r,s')\sim D(s,a)}\big[r + \gamma V^*(s')\big] \quad \text{(inductive hypothesis $V_k \leq V^*$)} \\
&\leq Q^*(s,a) \quad \text{(definition of $Q^*$)}
\end{align}

By Assumption ~\ref{ass:expectile-gap}, for any $s$ in dataset:
\begin{align}
    \mathbb{E}^\tau_{a\sim D(s)}[Q_k(s,a)] \leq \mathbb{E}^\tau_{a\sim D(s)}[Q^*(s,a)] \leq \max_{a \in D(s)} Q^*(s,a) - \delta_{\tau}
\end{align}

Invoking Assumption~\ref{ass:dataset-suboptimal}, the dataset support is suboptimal:
\begin{align}
    \max_{a \in D(s)} Q^*(s,a) \;\leq\; V^*(s) - (\delta_{\tau} + \delta_{sub}).
\end{align}

Thus, the first term, with probability greater than $\eta$, is bounded:
\begin{align}
   \mathbb{E}^\tau_{a\sim D(s)}[Q_{k}(s,a)] \;\leq\; V^*(s) - \delta_{sub} . 
\end{align}

\noindent\textbf{2. Policy Term (Out-of-Distribution):}  
For the second term, the policy \(\pi_\phi\) may query actions \(\hat{a}\) outside the dataset support. In practice \(Q\)-functions on OOD actions can suffer from overestimation due to function-approximation errors. We use the worst case bound:
\begin{align}
    \mathbb{E}_{\hat{a}\sim\pi_\phi}[Q_{k}(s,\hat{a})] \;\leq\; \frac{2R_{\max}}{(1-\gamma)} .
\end{align}
In practice, we can crop $Q$ if it ever exceeds the bound.

\vspace{0.5em}
\noindent\textbf{Combining the Bounds:}  
Substituting these bounds back into the update equation gives:
\begin{align}
    V_{k+1}(s) \;\leq\; (1-p)\big(V^*(s) - \delta_{\tau} - \delta_{sub}\big) \;+\; pV_{\max},
\end{align}
where $V_{\max}=\frac{2R_{\max}}{1-\gamma}$

To satisfy the inductive hypothesis \(V_{k+1}(s) \leq V^*(s)\), we require:
\begin{align}
    (1-p)\big(V^*(s) - (\delta_{\tau} + \delta_{sub})\big) + pV_{\max} \;\leq\; V^*(s).
\end{align}

Simplifying:

\begin{align}
p \leq \frac{\delta_{\tau} + \delta_{sub}}{V_{\max} - V^*(s) + (\delta_{\tau} + \delta_{sub})}.
\end{align}

Thus, if \(p\) is chosen according to this bound, the inductive step \(V_{k+1}(s) \leq Q^*(s)\) holds. By induction, the safety guarantee holds for the fixed point \(\hat{V}\).$\square$

%%%%%%%%%%%%%%%%%%%%%%%%%%%%%%%%%%%%%%%%%%%%%%%%%%%%%%

\section{Theoretical Analysis of Stochastic vs. Deterministic Interpolation of Policy}
\label{app:mode_collapse_proof}
We analyze the gradient dynamics of ISEP's stochastic action selection strategy compared to deterministic action interpolation. We consider a simplified case where the policy $\pi_\phi(a|s)$ is parameterized as a Gaussian $\mathcal{N}(\mu_\phi(s), \Sigma)$. Let the dataset action be $a_{\mathcal{D}}$ and the optimistic policy action be $a_{\pi}$.
\paragraph{Case 1: Deterministic Action Interpolation}\label{app:case1}
To analyze the gradient dynamics of deterministic action interpolation, we consider a simplified policy optimization setting where the policy $\pi_\phi(a \mid s)$ is parameterized as a Gaussian distribution $\mathcal{N}(\mu_\phi(s), \sigma^2 I)$ with fixed isotropic covariance $\sigma^2 I$. Here, $\mu_\phi(s)$ denotes the state-dependent mean parameterized by $\phi$, the dataset action is denoted $a_{\mathcal{D}}$, and the action sampled from current policy is denoted $a_{\pi}$. We assume the objective is to minimize a weighted negative log-likelihood loss, which is equivalent to maximizing the advantage-weighted log-probability:
\begin{align}
\mathcal{L'}_{\pi}(\phi) = p \cdot \mathbb{E}_{(s, a_{\mathcal{D}}) \sim \mathcal{D}} \bigl[ \omega(s, a_{\mathcal{D}}) \log \pi_\phi(a_{\mathcal{D}} \mid s) \bigr] + (1-p) \cdot \mathbb{E}_{\substack{s \sim \mathcal{D}, \\ a_{\pi} \sim \pi_\phi(\cdot \mid s)}} \bigl[ \omega(s, a_{\pi}) \log \pi_\phi(a_{\pi} \mid s) \bigr],
\label{eq:det_loss}
\end{align}
where $p \in [0,1]$ balances the off-policy (dataset) and on-policy contributions, and $\omega(s, a)$ are fixed scalar weights (e.g., advantage estimates) that depend on the action but are detached during backpropagation (i.e., treated as constants with respect to $\phi$).

Since the loss decomposes additively over states $s$ (assuming $\mathcal{D}$ induces a product distribution over state-action pairs), we can derive the critical point of $\mathcal{L'}_\pi(\phi)$ on a per-state basis. For a fixed state $s$, let $\mathcal{D}_s$ denote the conditional dataset distribution over actions $a_{\mathcal{D}} \sim \mathcal{D}_s$, and let the expectations be taken accordingly (marginalizing over $s \sim \mathcal{D}$ yields the full loss gradient). The log-probability under the Gaussian policy is $\log \pi_\phi(a \mid s) = -\frac{1}{2\sigma^2} \|a - \mu_\phi(s)\|^2 + c$, where $c$ is a constant independent of $\phi$. Differentiating with respect to $\mu_\phi(s)$ (treating $\sigma^2$ as fixed and using the detachment of $\omega$) yield:
\begin{align}
\nabla_{\mu_\phi(s)} \mathcal{L'}_\pi(\phi) = \frac{1}{\sigma^2} \Biggl( p \cdot \mathbb{E}_{a_{\mathcal{D}} \sim \mathcal{D}_s} \bigl[ \omega(s, a_{\mathcal{D}}) (\mu_\phi(s) - a_{\mathcal{D}}) \bigr] + (1-p) \cdot \mathbb{E}_{a_{\pi} \sim \pi_\phi(\cdot \mid s)} \bigl[ \omega(s, a_{\pi}) (\mu_\phi(s) - a_{\pi}) \bigr] \Biggr).
\label{eq:grad_det}
\end{align}
Setting this gradient to zero (the first-order necessary condition for a minimum, assuming convexity in $\mu_\phi(s)$) and solving for $\mu_\phi(s)$ gives the fixed-point equation:
\begin{align}
\mu_\phi(s) &= \frac{ p \cdot \mathbb{E}_{a_{\mathcal{D}} \sim \mathcal{D}_s} \bigl[ \omega(s, a_{\mathcal{D}}) a_{\mathcal{D}} \bigr] + (1-p) \cdot \mathbb{E}_{a_{\pi} \sim \pi_\phi(\cdot \mid s)} \bigl[ \omega(s, a_{\pi}) a_{\pi} \bigr] }{ p \cdot \mathbb{E}_{a_{\mathcal{D}} \sim \mathcal{D}_s} \bigl[ \omega(s, a_{\mathcal{D}}) \bigr] + (1-p) \cdot \mathbb{E}_{a_{\pi} \sim \pi_\phi(\cdot \mid s)} \bigl[ \omega(s, a_{\pi}) \bigr] }.
\label{eq:mu_interp}
\end{align}
This equation defines an implicit fixed point for $\mu_\phi(s)$, as the second expectation depends on the current policy distribution $\pi_\phi(\cdot \mid s)$. In practice, it can be solved iteratively (e.g., via fixed-point iteration or as part of gradient-based optimization).

Equation~\eqref{eq:mu_interp} reveals that the optimal policy mean $\mu_\phi(s)$ is a linear interpolation between the weighted expected dataset action $\mathbb{E}[\omega(s, a_{\mathcal{D}}) a_{\mathcal{D}}]/\mathbb{E}[\omega(s, a_{\mathcal{D}})]$ and the weighted expected policy action $\mathbb{E}[\omega(s, a_{\pi}) a_{\pi}]/\mathbb{E}[\omega(s, a_{\pi})]$, with global mixing weights $p$ and $1-p$. 
For mini-batch training, the expectations can be estimated via Monte Carlo samples, reducing to a pointwise interpolation:
\begin{align}
\tilde{\mu}_\phi(s) \approx \frac{ p \, \omega(s, a_{\mathcal{D}}) \, a_{\mathcal{D}} + (1-p) \, \omega(s, a_{\pi}) \, a_{\pi} }{ p \, \omega(s, a_{\mathcal{D}}) + (1-p) \, \omega(s, a_{\pi}) }.
\end{align}
where $a_{\mathcal{D}} \sim \mathcal{D}_s$ and $a_{\pi} \sim \pi_\phi(\cdot \mid s)$ are single samples; repeated applications converge to the expectation-based solution in \eqref{eq:mu_interp} under standard ergodicity assumptions.

This deterministic interpolation provides a stable baseline for policy updates, as it directly blends empirical data and policy samples without additional stochasticity beyond sampling. In contrast, ISEP's stochastic action selection (analyzed in Case 2) introduces variance through randomized mixing, which we compare empirically in Section~\ref{sec:ablation_action_mix}.

\paragraph{Case 2: Stochastic Mixture Strategy (Action Selection)}
In the stochastic setting, ISEP decouples the objectives using a Bernoulli gate $B \sim \text{Bernoulli}(p)$. The loss function for a single optimization step is defined as:
\begin{align}
\mathcal{L}_{\pi}(\phi) = B \cdot \mathbb{E}_{(s,a) \sim \mathcal{D}}[\omega(s,a)\log \pi_\phi(a \mid s)]
+ (1-B) \cdot \mathbb{E}_{\substack{s \sim \mathcal{D},\ \hat{a} \sim \pi_{\phi}(\cdot|s)}}[\omega(s,\hat{a})\log \pi_\phi(\hat{a} \mid s)]
\label{eq:stoch_loss}
\end{align}
where $B \in \{0, 1\}$ is sampled at each iteration and is detached from the computation graph.

To analyze the update dynamics, we derive the gradient of Equation~\ref{eq:stoch_loss} with respect to the Gaussian mean parameter $\mu_\phi(s)$:
\begin{align}
\nabla_{\mu_\phi} \mathcal{L}_{\pi}(\phi) = \frac{1}{\sigma^2} \left( B \cdot \mathbb{E}_{\mathcal{D}}\Big[\omega(s,a)(a - \mu_\phi)\Big] + (1-B) \cdot \mathbb{E}_{\pi}\Big[\omega(s,\hat{a})(\hat{a} - \mu_\phi)\Big] \right)
\label{eq:stoch_grad}
\end{align}

\textbf{Consistency in Expectation:}
Taking the expectation of the gradient over the Bernoulli distribution of $B$ (where $\mathbb{E}[B]=p$):
\begin{align}
\mathbb{E}_B \left[ \nabla_{\mu_\phi} \mathcal{L}_{\pi}(\phi) \right] &= \frac{1}{\sigma^2} \left( p \mathbb{E}_{\mathcal{D}}\Big[\omega(s,a)(a - \mu_\phi)\Big] + (1-p) \mathbb{E}_{\pi}\Big[\omega(s,\hat{a})(\hat{a} - \mu_\phi)\Big] \right)
\end{align}
This result is identical to Equation (2) in Case 1. Thus, the stochastic objective shares the same global fixed point $\mu^*_\phi$ as the deterministic interpolation in the limit of infinite samples.

\textbf{Instantaneous Gradient Dynamics (Action Selection):}
While the expectation is identical, the instantaneous descent direction differs fundamentally. At any specific training step $t$, the realization of $B_t$ creates two disjoint update regimes. Let the effective target action vector $y_t$ be defined based on the realization of $B_t$:
\begin{align}
\nabla_{\mu_\phi} \mathcal{L}_{\pi}(\phi) \propto \begin{cases}
\mathbb{E}_{\mathcal{D}}[\omega(s,a)(a - \mu_\phi)] & \text{if } B_t = 1 \implies \text{Target: } y_t \leftarrow a \sim \mathcal{D} \\
\mathbb{E}_{\pi}[\omega(s,\hat{a})(\hat{a} - \mu_\phi)] & \text{if } B_t = 0 \implies \text{Target: } y_t \leftarrow \hat{a} \sim \pi
\end{cases}
\end{align}

Unlike Case 1, where the gradient vector points towards a linear combination (interpolation) of $a$ and $\hat{a}$, the gradient in Case 2 forces the policy mean to move strictly towards \textit{either} the dataset distribution \textit{or} the current policy distribution at any given step. This prevents the optimization trajectory from traversing regions of the parameter space that correspond to invalid interpolations of actions, effectively acting as a stochastic switch rather than a blender.

\paragraph{Remark: Optimization Trajectory and Validity.}
It is important to note that while $\mathbb{E}[\nabla_{\text{stoch}}] = \nabla_{\text{det}}$, the optimization trajectories differ significantly due to the variance of the gradient estimator. In the deterministic case (Eq. \ref{eq:det_loss}), the gradient vector at every step points toward the linear combination of the targets. If the valid action space is non-convex (e.g., an obstacle exists between $a_{\mathcal{D}}$ and $a_{\pi}$), the deterministic gradient may consistently push the policy parameters $\phi$ through regions of high loss (invalid actions). In contrast, the stochastic mixture (Eq. \ref{eq:policy_loss}) performs \textit{action selection}. By sampling the objective gate $B$ (particularly if $B$ is fixed per optimization step), the gradient update is directed entirely toward either the dataset manifold or the current policy manifold. The optimization trajectory effectively "zig-zags" between valid regions rather than traversing the invalid interpolation zone. This stochasticity also introduces a regularizing noise term proportional to the disagreement between the dataset and the policy:
\[
\text{Var}(\nabla \mathcal{L}_{\text{stoch}}) = \text{Var}(\nabla \mathcal{L}_{\text{det}}) + p(1-p) \Big\| \nabla \mathcal{L}_{\mathcal{D}} - \nabla \mathcal{L}_{\pi} \Big\|^2
\]
This variance term is non-zero whenever the gradients disagree, discouraging the optimization from settling in unstable interpolated regions. Besides, this noise is beneficial: it enables escaping poor local minima while acting as implicit regularization---the system naturally avoids regions with conflicting gradient signals~\cite{sgd}.

\textbf{Remark on Batch Dynamics:} Even when $B$ is sampled element-wise within a minibatch, the stochastic objective ensures that the loss is computed pointwise against valid targets. The gradient contribution for the $i$-th sample is $\nabla \mathcal{L}_i \propto (\mu_\phi(s_i) - y_i)$, where $y_i \in \{a_{\mathcal{D}}, a_{\pi}\}$. This avoids the explicit construction of potentially invalid target vectors.

\section{Hyperparameter Details and Reproducibility}
\label{sec:appendix_hyperparams}
To ensure the reproducibility of our experimental results, we provide the specific task-dependent hyperparameters used for both variants of our algorithm. Table \ref{tab:hyperparams_base} lists the parameters for the Gaussian variant (ISEP), while Table \ref{tab:hyperparams_fm} details the parameters for the Flow-Matching variant (ISEP-FM).
\begin{table}[H]
\centering
\caption{Hyperparameters for the base Gaussian ISEP across D4RL tasks. The table details the interpolation parameter ($p$), expectile ($\tau$), and the sampling temperature ($\beta$).}
\label{tab:hyperparams_base}
\begin{tabular}{lccc}
\toprule
\textbf{Task} & \textbf{$p$ } & \textbf{$\tau$ (expectile)} & \textbf{$\beta$ (temperature)} \\
\midrule
halfcheetah-medium-v2          & 0.5 & 0.8 & 5.0 \\
hopper-medium-v2               & 0.5 & 0.8 & 3.0 \\
walker2d-medium-v2             & 0.3 & 0.8 & 5.0 \\
halfcheetah-medium-replay-v2   & 0.5 & 0.8 & 5.0 \\
hopper-medium-replay-v2        & 0.3 & 0.8 & 3.0 \\
walker2d-medium-replay-v2      & 0.5 & 0.8 & 5.0 \\
halfcheetah-medium-expert-v2   & 0.3 & 0.8 & 3.0 \\
hopper-medium-expert-v2        & 0.3 & 0.8 & 3.0 \\
walker2d-medium-expert-v2      & 0.3 & 0.8 & 3.0 \\
pen-human-v1                   & 0.2 & 0.7 & 0.5 \\
pen-cloned-v1                  & 0.5 & 0.7 & 0.5 \\
kitchen-complete-v0            & 0.5 & 0.7 & 0.05 \\
kitchen-partial-v0             & 0.2 & 0.8 & 0.1 \\
kitchen-mixed-v0               & 0.3 & 0.8 & 0.1 \\
\bottomrule
\end{tabular}
\end{table}

\begin{table}[H]
\centering
\caption{Hyperparameters for the ISEP-FM variant across D4RL tasks. The \textbf{guidance\_weight} ($w$) denotes the classifier-free guidance weight applied during inference. Analogous to the temperature ($\beta$) in base ISEP, this guidance weight controls the degree of policy improvement.}
\label{tab:hyperparams_fm}

\begin{tabular}{lcccc}
\toprule
\textbf{Task} & \textbf{$p$} & \textbf{$\tau$ (expectile)} & \textbf{$w$ (guidance\_weight)} \\
\midrule
halfcheetah-medium-v2          & 0.3 & 0.7 & 2.0  \\
hopper-medium-v2               & 0.2 & 0.7 & 1.0  \\
walker2d-medium-v2             & 0.3 & 0.8 & 2.0  \\
halfcheetah-medium-replay-v2   & 0.3 & 0.8 & 2.0  \\
hopper-medium-replay-v2        & 0.5 & 0.8 & 3.0 \\
walker2d-medium-replay-v2      & 0.3 & 0.8 & 3.0 \\
halfcheetah-medium-expert-v2   & 0.3 & 0.8 & 3.0 \\
hopper-medium-expert-v2        & 0.2 & 0.7 & 3.0 \\
walker2d-medium-expert-v2      & 0.3 & 0.7 & 2.0 \\
pen-human-v1                   & 0.2 & 0.7 & 3.0 \\
pen-cloned-v1                  & 0.2 & 0.7 & 1.0 \\
kitchen-complete-v0            & 0.2 & 0.8 & 0.5 \\
kitchen-partial-v0             & 0.2 & 0.8 & 1.0 \\
kitchen-mixed-v0               & 0.2 & 0.8 & 0.2 \\
\bottomrule
\end{tabular}
\end{table}

\end{document}